\documentclass{article}

 \usepackage[preprint]{neurips_2026}

\usepackage[utf8]{inputenc} % allow utf-8 input
\usepackage[T1]{fontenc}    % use 8-bit T1 fonts
\usepackage{hyperref}       % hyperlinks
\usepackage{url}            % simple URL typesetting
\usepackage{booktabs}       % professional-quality tables
\usepackage{amsfonts}       % blackboard math symbols
\usepackage{nicefrac}       % compact symbols for 1/2, etc.
\usepackage{microtype}      % microtypography
\usepackage{xcolor}         % colors
\usepackage{amsmath}
\usepackage{amssymb}
\usepackage{amsthm}
\usepackage{amsmath}
\usepackage{enumitem}
\usepackage{graphicx}
\usepackage{algorithm}
\usepackage{enumitem}
\usepackage{algpseudocode}
\usepackage{dsfont}
\newtheorem{definition}{Definition}
\newtheorem{lemma}{Lemma}
\newtheorem{proposition}{Proposition}
\newtheorem{corollary}{Corollary}
\newtheorem{theorem}{Theorem}

\title{Differentiable Belief-based Opponent Shaping}

\author{%
  Aarav G.~Sane \\
  Department of Computer Science\\
  Purdue University\\
  \texttt{sane0@purdue.edu} \\
  \And
  Karthik Sivachandran \\
  Department of Computer Science \\
  Purdue University \\
  \texttt{ksivacha@purdue.edu} \\
  \AND
  Rohan R.~Paleja  \\
  Department of Computer Science \\
  Purdue University \\
  \texttt{rpaleja@purdue.edu} \\
}

\begin{document}
% \raggedbottom
\setlist{nosep}

\maketitle

% \epigraph{\emph{"Never put passion in front of principle, because even if you win you lose"}}{Mr. Miyagi}

\begin{abstract}
Human coordination often relies on the ability to influence the beliefs of others through strategic action. In multi-agent reinforcement learning, opponent shaping attempts to replicate this influence, though existing methods typically operate within an opponent's parameter, policy, or value space. Meanwhile, belief-manipulation techniques in hidden-role games often rely on hard-coded objectives, such as deception or belief saturation. We propose Differentiable Belief-based Opponent Shaping (D-BOS), a first-order method that treats each observer's belief as the shaped opponent state and differentiates through $k$-step softmax-Bayes belief dynamics. Rather than explicitly rewarding deceptive or cooperative behavior, our method treats the belief state as the target for shaping. This allows the optimal strategy to emerge naturally from the environment's reward structure. This belief-space formulation provides an opponent-shaping signal by differentiating through opponent belief updates, and naturally extends to multiple observers by aggregating gradients over their individual inferred belief trajectories. Empirically, D-BOS outperforms PPO and BBM in hidden-role games, with the largest gains in mixed-motive settings.
\end{abstract}
\vspace{-1.75mm}
\section{Introduction}
\vspace{-1.75mm}
Humans routinely reason about what others believe, and use that reasoning to coordinate, persuade, conceal, or reveal information. This ability is central to Theory of Mind~\citep{premack1978theory,wimmer1983beliefs} and has become an important motivation for computational models of human-agent collaboration and human-robot teaming~\citep{erdogan2022abstracting, paleja2023human}. In collaborative settings, from shared autonomy to ad hoc teaming with heterogeneous strategies~\citep{chen2020joint, paleja2021utility}, the useful partner is not merely the one that chooses good actions in isolation, but the one that understands how its actions change what others think~\citep{shafto2014rational, ho2016showing}. We study this problem in hidden-role partially observable Markov games, such as Avalon~\citep{serrino2019finding}, Rescue-the-General~\citep{bbm}, and the Multi-Agent Coin Game~\citep{raileanu2018modeling}, which are controlled multi-agent testbeds where roles, intentions, and goals are latent, and where an agent's behavior changes the posterior beliefs of allies and adversaries.

Current multi-agent reinforcement learning methods struggle to capture this interaction. Existing opponent-shaping algorithms~\citep{foerster2018lola, aghajohari2024loqa, zhao2022pola, willi2022cola, duque2025aa} anticipate an opponent's learning update to optimize returns, but operate in high-dimensional spaces, require strong assumptions, and are typically restricted to two-player settings. Conversely, belief-manipulation methods like BBM~\citep{bbm} model hidden beliefs but rely on myopic, prescribed objectives like fixed deception. These limitations motivate a natural synthesis: an agent should shape opponent beliefs over multiple steps driven entirely by task reward. Subordinating belief manipulation to task reward ensures it acts as an instrumental mechanism for optimal play rather than a rigid behavioral prior.

We propose \textbf{Differentiable Belief-based Opponent Shaping (D-BOS)}, a first-order method for opponent shaping in belief space. D-BOS treats each observer's (i.e., any other agent in the environment) posterior over hidden roles as the target state to be shaped. It unrolls a differentiable softmax-Bayes update for $k$ steps and backpropagates through the resulting belief trajectory, thereby explicitly optimizing the long-term task value of the induced future belief states. The direction of belief shaping is therefore organically determined: in an adversarial setting, D-BOS can obscure the agent's role, while in a cooperative setting, it can make the correct role easier to infer. We further analyze how D-BOS navigates mixed-motive settings, where it may need to conceal its identity from adversaries while simultaneously revealing it to allies.

Our contributions are:
\begin{itemize}[leftmargin=*,noitemsep]
    \item \textbf{Differentiable belief dynamics for opponent shaping}: D-BOS backpropagates through $k$-step softmax-Bayes belief updates, using the explicit softmax Jacobian to produce temporally extended belief-shaping gradients.
    \item \textbf{A LOLA-style view in belief space}: We show that, under a Bayesian observer model, shaping beliefs is equivalent to shaping the opponent's induced policy, yielding the same meta-gradient structure as LOLA but with a low-dimensional belief state.
    \item \textbf{Error bounds for second-order Theory of Mind}: We bound how approximation error in second-order observations propagates through the $k$-step belief chain and into the shaping gradient, exposing the tradeoff between longer lookahead and belief-model accuracy.
    \item \textbf{Hidden-role experiments}: We evaluate D-BOS against PPO and BBM in Rescue-the-General, Avalon variants, and multi-agent CoinGame to study whether belief-space shaping improves return, belief trajectories, and stability.
\end{itemize}

\vspace{-1.75mm}
\section{Related Work}
\vspace{-1.75mm}
We situate our method at the intersection of three primary lines of research: algorithms for active opponent shaping, computational models of Theory of Mind for belief manipulation, and the embedding of differentiable belief updates within neural architectures.

\paragraph{Opponent Shaping}
Opponent shaping studies how one learner can improve its own long-term outcome by accounting for how its actions change other learners. LOLA~\citep{foerster2018lola} introduced this idea by differentiating through an opponent's anticipated gradient step, which yields a meta-gradient involving mixed second derivatives and requires a differentiable model of the opponent's learning update. Subsequent work refined this basic formulation: SOS~\citep{letcher2019sos} modifies the shaping term to improve stability in differentiable games, COLA~\citep{willi2022cola} addresses consistency when both agents use opponent-aware updates, and POLA~\citep{zhao2022pola} uses a proximal formulation to reduce sensitivity to policy parameterization. M-FOS~\citep{lu2022mfos} instead treats opponent shaping as a meta-learning problem, avoiding a specific differentiable opponent optimizer. Other work reduces or changes the object being differentiated through. LOQA~\citep{aghajohari2024loqa} assumes opponents act through softmax Q-values rather than through an explicit policy-gradient learner. Advantage Alignment~\citep{duque2025aa} shows that LOLA- and LOQA-style shaping terms can be expressed through advantage products, yielding a first-order algorithm. D-BOS is closest in spirit to this line of work: it also seeks a first-order shaping signal. Rather than shaping the advantage function, D-BOS shapes an explicit posterior belief state with a known softmax-Bayes update. This preserves the opponent-shaping viewpoint while moving the differentiable update map into the hidden-role belief dynamics.
\vspace{-1.75mm}
\paragraph{Belief Manipulation and Theory of Mind}
Belief manipulation has been studied most directly in hidden-role and deception settings. Bayesian Belief Manipulation (BBM)~\citep{bbm} is our closest predecessor. In this framework, the agent estimates what an observer believes about its role and computes a Bayes factor to quantify the information revealed by a specific action, as shown in Equation \eqref{eq:bayes_factor}.
\begin{align}
    \label{eq:bayes_factor}
    \rho = \frac{\pi_{z^*}(a | h_{i,j,i})}{\sum_{z \in \mathcal{Z}} \pi_z(a | h_{i,j,i}) P_{i,j}(A_i^z)}
\end{align}
Here, $\rho$ is the Bayes factor, $\pi_{z^*}$ is the likelihood under the shaping agent's true role $z^*$, and $P_{i,j}$ represents the observer's current belief. $h_{i,j,i}$ represents the second-order Theory of Mind estimate of what agent $A_i$ (the shaping agent) thinks agent $A_j$ (the observer) thinks $A_i$ observes. This value is then transformed into an intrinsic reward signal, as expressed by $r_{int} = -\log(\rho)$, incentivizing deception. 
% \begin{align}
% \label{eq:bbm_reward}
% r_{int} = -\log(\rho)
% \end{align}
While BBM provides a practical way to train deceptive agents, its shaping signal is fundamentally \emph{myopic}, as it is computed independently at each timestep and relies on a prescribed direction (deception) rather than the environment's task reward.

More broadly, Theory-of-Mind models have been used to infer agents' latent goals, beliefs, and future behavior. ToMnet~\citep{rabinowitz2018tomnet} learns to model other agents from behavior, and Oguntola~\citep{oguntola2025} formalizes recursive Theory of Mind across several multi-agent testbeds. Similarly, Alon et al. (2023)~\citep{schulz2023} study how recursive reasoning can lead to deception in communication games. These works operate primarily as passive observers or heuristic frameworks. They do not yield a differentiable shaping gradient that can be used for end-to-end reinforcement learning, which precludes them from serving as active opponent-shaping baselines.
Finally, while Model-Based Opponent Modeling (MBOM)~\citep{mobm} uses environment models to adapt to reasoning learners, it is distinct from D-BOS as it focuses on adaptation based on an inferred policy of the opponent. Furthermore, MBOM requires the training of an explicit environment transition model to simulate recursive imagination, while D-BOS is environment-model-free, requiring only the differentiable map of the observer's belief update.
\vspace{-1.75mm}
\paragraph{Differentiable Belief Updates}
Differentiable filtering methods show that Bayesian updates can be embedded inside neural computation. QMDP-net~\citep{karkus2017qmdp} encodes Bayesian filtering and QMDP planning as differentiable neural network layers for partially observable control. IPOMDP-net~\citep{han2019ipomdp} extends this idea to multi-agent settings with interactive belief updates. These methods use differentiable beliefs to help an agent plan under its own uncertainty. D-BOS uses the same computational affordance for a different purpose: differentiating through the belief update of an \emph{observer} to shape what that observer will believe in the future and how the agent's actions affect these belief dynamics. This distinction is important in hidden-role games, where the belief state being optimized is not only the agent's internal uncertainty, but another agent's posterior over the shaper's latent role.

\vspace{-1.75mm}
\section{Preliminaries}
\vspace{-1.75mm}

% \subsection{Hidden-Role Partially Observable Markov Games}
We study $n$-player \emph{partially observable hidden-role games}, a special case of stochastic games~\citep{shapley1953stochastic} and partially observable stochastic games~\citep{hansen2004posg}. We formalize each game as a tuple
$
  \langle \mathcal{N},\, \mathcal{Z},\, \mathcal{S},\, \{\mathcal{O}^i\},\, \mathcal{A},\, T,\, \{r^i\},\, \Omega,\, \gamma \rangle,
$
where $\mathcal{N} = \{1,\ldots,n\}$ is the set of agents; $\mathcal{Z} = \{0,\ldots,|\mathcal{Z}|-1\}$ is a finite set of \emph{hidden role hypotheses} (e.g.\ agent identities or communication slots); $\mathcal{S}$ is the shared world state; $\mathcal{O}^i$ is agent $i$'s private observation space; $\mathcal{A}$ is the joint action space; $T: \mathcal{S} \times \mathcal{A} \to \Delta(\mathcal{S})$ is the transition kernel; $r^i: \mathcal{S} \times \mathcal{A} \to \mathbb{R}$ is agent $i$'s reward; $\Omega: \mathcal{S} \to \prod_i \Delta(\mathcal{O}^i)$ is the observation function; and $\gamma \in (0,1)$ is the discount factor. At the start of each episode, a latent role $z \in \mathcal{Z}$ is sampled and held fixed; agents hold knowledge of their assigned role. All agents share a role-conditioned policy network $\pi_\theta(a \mid o, z)$, which outputs action probabilities given a local observation and a role hypothesis.
% \paragraph{Nomenclature} 
To clarify the interaction dynamics within our framework, we distinguish between two primary entities:

\begin{itemize}[leftmargin=*,noitemsep]
    \item Agent: We refer to the entity that shapes and optimizes the belief-based meta-gradient as the agent.
    \item Observers: All other participants in the environment whose internal beliefs are the target of the shaping process are referred to as observers.
\end{itemize}

% \subsection{Belief States and Role Hypotheses}
In any given environment, an observer maintains a belief distribution over the set of possible role assignments $z \in \mathcal{Z}$. While the specific composition and size of $\mathcal{Z}$ vary according to the rules of the environment, we treat $\mathcal{Z}$ as a generic discrete set for our mathematical formulation.

\subsection{Learning with Opponent-Learning Awareness (LOLA)}
LOLA~\citep{foerster2018lola} studies games in a two-player setting, where an agent (agent 1) accounts for how its own policy parameters $\theta_1$ influence the anticipated update of the observer (agent 2), as shown in Eq. \eqref{eq:lola_objective}:\begin{equation}\label{eq:lola_objective}\Delta \theta_1 = \nabla_{\theta_1} V_1(\theta_1, \theta_2 + \Delta \theta_2)\end{equation}In Equation \eqref{eq:lola_objective}, $V_1$ is the value function of the agent and $\Delta \theta_2$ is the observer's learning step. LOLA optimizes through this parameter update, which typically requires second-order derivatives and knowledge of the observer's learning dynamics.

\subsection{BBM and Second-Order Belief Updates}
BBM~\citep{bbm} adopts a second-order Theory of Mind (ToM) framework where the agent estimates an observer's posterior over the agent's hidden role. Because the agent cannot directly access an observer $j$'s private view of the agent $i$, it must estimate what the observer thinks the agent can see. This second-order observation estimate is denoted as $\hat{o}_{i,j,i}$. The agent's approximation of the observer's belief update is defined in Equation \eqref{eq:approx-bayes-update}.
\begin{equation}\label{eq:approx-bayes-update}\hat{b}_{t+1}^{(j),z} = \frac{\pi_\theta(a_t^i \mid \hat{o}_{i,j,i}, z) \hat{b}_t^{(j),z}}{\sum_{z' \in Z} \pi_\theta(a_t^i \mid \hat{o}_{i,j,i}, z') \hat{b}_t^{(j),z'}}
\end{equation}
In Equation \eqref{eq:approx-bayes-update}, $\hat{b}_{t+1}^{(j),z}$ is the agent's estimate of the probability that observer $j$ assigns to agent $i$'s role being $z$, and $\pi_\theta(a_t \mid \hat{o}_{i,j,i}, z)$ is the role-conditioned action likelihood for role $z$.

\subsection{Observation Proxies}
\label{sec:obs-proxies}
The implementation of $\hat{o}_{i,j,i}$ in Equation \eqref{eq:approx-bayes-update} requires a proxy for the observer's perspective. We utilize two distinct modes depending on the social complexity of the environment:
\begin{itemize}[leftmargin=*,noitemsep]
    \item Canonical Mode: In environments where observers share a fundamentally similar perspective, the agent uses its own observation as a robust heuristic stand-in. This reduces approximation error from additional learned modules and serves as our primary baseline.
    \item Estimated Mode: In mixed-motive settings, an agent must simultaneously deceive adversaries and coordinate with allies, requiring distinct belief trajectories for each observer. To achieve true multi-observer shaping, we deploy a learned backward-prediction module to explicitly infer $\hat{o}_{i,j,i}$ (what observer $j$ believes agent $i$ can see).
\end{itemize}
Because the Estimated Mode is uniquely suited for navigating conflicting observer perspectives, we focus its evaluation on our mixed-motive testbed, Avalon, while relying on the Canonical proxy alongside public state information for the other environments.

\vspace{-1.75mm}
\section{Method}
\label{sec:methods}
\vspace{-1.75mm}

\subsection{Differentiable Belief Shaping of Opponents}
The Bayesian belief update (Eq.~\ref{eq:approx-bayes-update}) involves a ratio: a likelihood multiplied by a prior, divided by a normalizing constant. This is functionally equivalent to the softmax operation applied in log-space. To make the dependence on $\theta$ explicit and differentiable, we first define a log-likelihood vector $\boldsymbol{\ell}_t \in \mathbb{R}^{|\mathcal{Z}|}$, where each element $\ell_t^z = \log \pi_\theta(a_t \mid o_t, z)$. We then rewrite the belief update as expressed in Equation \eqref{eq:diff-bayes}.
\begin{equation}
\label{eq:diff-bayes}
\mathbf{b}_{t+1} = \text{softmax}\!\left(\boldsymbol{\ell}_t + \log \mathbf{b}_t\right)
\end{equation}
In Equation \eqref{eq:diff-bayes}, the softmax operates over the latent role dimension $|\mathcal{Z}|$, ensuring the belief vector remains a valid probability distribution. The Jacobian of this softmax operation provides a clean gradient flow through each update step, as defined as
% in Equation \eqref{eq:softmax-jac}:
% \begin{equation}
% \label{eq:softmax-jac}
$\frac{\partial b^{z'}_{t+1}}{\partial \ell_t^z} = b^{z'}_{t+1}\!\left(\mathds{1}[z'=z] - b^z_{t+1}\right)$.
% \end{equation}
Here $\mathds{1}$ is a function evaluating to $1$ when the target role $z'$ matches the hypothesis role $z$, and $0$ otherwise.

% \paragraph{BeliefCritic.}
Given this differentiable update rule, we train a \emph{BeliefCritic}, which acts as a learned value function of the agent's local state and the joint estimated belief state of all $m$ observers in the environment. We define the multi-observer belief matrix as $\hat{\mathbf{B}}_t \in (\Delta^{|\mathcal{Z}|})^m$, where each row corresponds to a specific observer's tracked posterior over the agent's role. This function is defined in Equation \eqref{eq:belief-critic}.
\begin{equation}
\label{eq:belief-critic}
V_\phi : \mathcal{O}^i \times (\Delta^{|\mathcal{Z}|})^m \to \mathbb{R},\qquad V_\phi(o_t^{(i)}, \hat{\mathbf{B}}_t) \approx \mathbb{E}\!\left[\, \sum_{s \geq t} \gamma^{s-t} r^{\text{ext}}_s \,\Big|\, o_t^{(i)}, \hat{\mathbf{B}}_t, \pi_\theta \right]
\end{equation}
In Equation \eqref{eq:belief-critic}, $V_\phi$ takes the full observer belief matrix $\hat{\mathbf{B}}_t$ as a direct input, learning the task-value of the environment's overall belief configuration for the agent under the current reward structure. By formalizing the domain over $(\Delta^{|\mathcal{Z}|})^m$, the critic naturally scales to environments with multiple distinct observers (such as the Estimated Mode), while supporting scenarios where observers share identical proxy trajectories (such as the Canonical Mode) without altering the underlying architecture. $V_\phi$ is trained alongside $\pi_\theta$ via standard temporal difference (TD) regression against bootstrapped returns.
We formalize the $k$-step D-BOS loss as 
% \eqref{eq:bos-obj}:
% \begin{equation}
% \label{eq:bos-obj}
$\mathcal{L}_{\text{BOS}}^k = - V_\phi\!\left(o_{t+k}^{(i)},\, \mathbf{B}_{t+k}\right)$
% \end{equation}
Here, the multi-observer belief matrix $\hat{\mathbf{B}}_{t+k}$ is obtained by chaining $k$ softmax-Bayes updates for all modeled observers. Minimizing $\mathcal{L}_{\text{BOS}}^k$ drives the policy parameters $\theta$ toward actions whose induced $k$-step joint belief trajectories land in high-value regions. The critic parameters $\phi$ are held fixed when computing the D-BOS gradient; the gradient flows exclusively through $\hat{\mathbf{B}}_{t+k}$'s dependence on $\theta$.

\textbf{Training Algorithm}
Algorithmically, D-BOS is implemented as PPO with an additional belief-shaping correction computed from the on-policy rollout. Algorithm~\ref{alg:bos-training} gives the implementation-independent training loop. In practice, we instantiate the correction either by directly differentiating the $k$-step belief critic objective or by using an equivalent likelihood-coefficient surrogate that detaches the endpoint coefficients for stability; Appendix~\ref{app:coeff-correction} gives the exact implementation details.

\textbf{Multiple observers and observation modes.}
When several agents observe the shaper, D-BOS can theoretically maintain a separate belief trajectory $\mathbf{b}^{j}_{t:t+k}$ for each observer $A_j$, aggregating the shaping loss across observers. However, the capacity to uniquely shape multiple observers depends on the chosen observation proxy (Section~\ref{sec:obs-proxies}). Under the \emph{canonical mode}, the agent uses its own observation $o^{(i)}_t$ for all observers, effectively collapsing the update into a single, generalized hypothetical observer. True multi-observer shaping, where distinct belief trajectories are modeled, requires the \emph{estimated mode}, which provides observer-specific proxies $\hat{o}_{i,j,i}$.

\subsubsection{Gradient Correction (LOLA-Style)}
Rather than adding $\mathcal{L}_{\text{BOS}}^k$ directly to the PPO surrogate loss, D-BOS follows the LOLA paradigm and injects the shaping gradient as a fixed correction. The total shaping gradient is computed \emph{once} from the on-policy rollout trajectory $\pi_{\theta_0}$ prior to any PPO minibatch updates, as shown in Equation \eqref{eq:shaping-grad}:
\begin{equation}
\label{eq:shaping-grad}
g_{\text{BOS}} = \nabla_\theta \frac{\lambda}{T-k+1} \sum_t \mathcal{L}_{\text{BOS}}^k(t) \;\Big|_{\theta = \theta_0}
\end{equation}
Here $\lambda$ controls the scale of the shaping vector and $\theta_0$ denotes the rollout parameters. During optimization, PPO performs $M$ total inner gradient steps (where $M = \text{epochs} \times \text{minibatches}$). At each of these steps, the parameter update is applied as shown in Equation \eqref{eq:full-gradient}:
\begin{equation}
\label{eq:full-gradient}
\theta \leftarrow \theta - \alpha\!\left(\nabla_\theta \mathcal{L}_{\text{PPO}} + \frac{g_{\text{BOS}}}{M}\right)
\end{equation}
Our implementation also supports a likelihood-coefficient form that computes the same endpoint value sensitivity with respect to role-conditioned action likelihoods, detaches these coefficients, and optimizes a stable surrogate policy loss during PPO. Both forms use the same softmax-Bayes chain and the same endpoint critic $V_\phi(o_{t+k},B_{t+k})$; they differ only in how the resulting correction is applied numerically. Further details of our algorithm are provided in Appendix~\ref{app:coeff-correction}.

\subsubsection{Belief as a Sufficient Statistic for Opponent Policy}

In hidden-role games, we can represent a Bayesian-rational opponent conditioning their actions on their posterior belief about the shaping agent's role:
\begin{equation}
    \pi_{\text{opp}}^*(a \mid o_t) = \sum_{z \in Z} b_t^z \cdot \pi_{\text{opp}}^*(a \mid o_t, z)
    \label{eq:opp-policy}
\end{equation}

\begin{proposition}[Belief is a sufficient statistic]
\label{prop:sufficient-statistic}
Let the opponent be Bayesian-rational with policy taking the role-mixture form (Eq.~\ref{eq:opp-policy}), and assume the role-conditioned responses $\pi_{\text{opp}}^*(\cdot \mid o_t, z)$ are independent of the shaper's parameters $\theta_1$. Then the marginal action distribution $\pi_{\text{opp}}^*(\cdot \mid o_t)$ depends on $\theta_1$ only through $b_t$:
\begin{equation}
    {\frac{\partial \pi_{\text{opp}}^*(a \mid o_t)}{\partial \theta_1} = \sum_{z \in Z} \frac{\partial b_t^z}{\partial \theta_1} \cdot \pi_{\text{opp}}^*(a \mid o_t, z)}
\end{equation}
The gradient of the opponent's policy w.r.t.\ the shaper's parameters $\theta_1$ is fully determined by $\partial \mathbf{b}_t / \partial \theta_1$, the D-BOS gradient.
\end{proposition}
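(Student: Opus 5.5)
The plan is to differentiate the role-mixture representation (Eq.~\ref{eq:opp-policy}) directly with respect to $\theta_1$ and apply the product rule term by term. The sum over $\mathcal{Z}$ is finite, so differentiation commutes with summation and no interchange-of-limits issue arises. Each summand $b_t^z \cdot \pi_{\text{opp}}^*(a \mid o_t, z)$ then yields two contributions: $\frac{\partial b_t^z}{\partial \theta_1}\,\pi_{\text{opp}}^*(a \mid o_t, z)$ and $b_t^z\,\frac{\partial \pi_{\text{opp}}^*(a \mid o_t, z)}{\partial \theta_1}$.

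\textbf{Step 1 (differentiability of $b_t$).} I first need that $b_t$ is a differentiable function of $\theta_1$. I will argue this by unrolling the softmax-Bayes recursion (Eq.~\ref{eq:diff-bayes}) from the prior $\mathbf{b}_0$, which does not depend on $\theta_1$. Each step composes $\log \mathbf{b}_t$, the log-likelihoods $\ell_t^z = \log \pi_{\theta_1}(a_t \mid o_t, z)$, and a softmax. So $b_t$ is smooth in $\theta_1$ as long as $\pi_{\theta_1}$ is smooth and strictly positive on the realized actions. The positivity holds automatically for a softmax-parameterized policy network. Its Jacobian is obtained by the chain rule through the softmax Jacobian stated earlier. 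This is exactly the quantity D-BOS backpropagates, which justifies identifying $\partial \mathbf{b}_t/\partial\theta_1$ with the D-BOS gradient.

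\textbf{Step 2 (killing the second term).} By hypothesis, the role-conditioned responses $\pi_{\text{opp}}^*(\cdot \mid o_t, z)$ do not depend on $\theta_1$, so their partial derivative vanishes. The care needed here is that $o_t$ is treated as fixed: we compute a partial derivative at the given observation, so any indirect dependence of $o_t$ on $\theta_1$ through the trajectory distribution is excluded. I will state this conditioning explicitly, since it is where the claim ``depends on $\theta_1$ only through $b_t$'' gets its precise meaning.

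\textbf{Step 3 (conclusion).} Summing the surviving terms gives the displayed identity. Both the marginal $\pi_{\text{opp}}^*(\cdot\mid o_t)$ and its gradient then factor through $b_t$, via a linear map with $\theta_1$-independent coefficients $\pi_{\text{opp}}^*(a\mid o_t,z)$. As a sanity check, summing the identity over $a$ gives $\sum_z \partial b_t^z/\partial\theta_1 = 0$, which is consistent with $b_t$ lying on the simplex.

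The main obstacle is not the calculation, which is a one-line product rule, but making Step 2's assumptions precise: fixing $o_t$, and ensuring the softmax-Bayes chain is well defined (positive likelihoods) so that Step 1 applies.
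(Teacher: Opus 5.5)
Your proposal is correct and matches the paper's proof: differentiate the finite role mixture by the product rule, and the term $b_t^z\,\partial \pi_{\text{opp}}^*(a \mid o_t, z)/\partial\theta_1$ vanishes by the independence hypothesis. Your extra care goes slightly beyond what the paper spells out but does not change the argument: you justify differentiability of $b_t$ through the softmax-Bayes recursion from a $\theta_1$-independent prior, you make explicit that $o_t$ is held fixed, and you note the simplex sanity check $\sum_z \partial b_t^z/\partial\theta_1 = 0$.
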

A proof is provided in Appendix~\ref{proof:sufficient-statistic}.
% This is the key insight: \textbf{shaping $\mathbf{b}_t$ is equivalent to shaping the opponent's policy.} Rather than operating on the opponent's parameter space $\theta_2$ (as LOLA does), D-BOS operates on the belief simplex $\Delta_{|Z|}$, a compact $|Z|$-dimensional space with a known, differentiable update rule for the softmax.

\subsubsection{A Meta-Gradient View (and Relation to LOLA)}

Opponent shaping methods can be seen as differentiating through an \emph{opponent update map}.
To make this precise, we introduce an opponent internal state $\eta_t$ that parameterizes their behavior (e.g., network weights, beliefs, or a sufficient statistic).

\begin{theorem}[Opponent shaping via differentiable opponent state]
\label{thm:meta-shaping}
Let the opponent act according to a policy $\pi_{\text{opp}}(\cdot \mid o_t, \eta_t)$ and update an internal state $\eta_t \in \mathbb{R}^d$ via a differentiable map
\begin{equation}
    \eta_{t+1} = U(\eta_t, \theta_1, \xi_t),
    \label{eq:opp-state-update}
\end{equation}
where $\theta_1$ are the shaper's parameters and $\xi_t$ collects variables observed by the opponent (including the shaper's action). For any objective $J_1(\theta_1, \eta_t)$ differentiable in both arguments,
\begin{equation}
    \nabla_{\theta_1} J_1(\theta_1, \eta_t)
    = \underbrace{\frac{\partial J_1}{\partial \theta_1}\bigg|_{\eta_t}}_{\text{direct term}}
    + \underbrace{\left(\nabla_{\theta_1} \eta_t\right)^\top \nabla_{\eta_t} J_1(\theta_1, \eta_t)}_{\text{shaping term}}.
    \label{eq:meta-gradient-decomp}
\end{equation}
Moreover, $\nabla_{\theta_1} \eta_t$ is obtained by differentiating through the recursion in Eq.~\ref{eq:opp-state-update}.
\end{theorem}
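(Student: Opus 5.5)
The plan is to read $\eta_t$ as a function of $\theta_1$: write $\eta_t = \eta_t(\theta_1)$ for a fixed realization of the observed variables $\xi_0,\dots,\xi_{t-1}$ and a fixed initial state $\eta_0$, which I take to be independent of $\theta_1$. If $\xi_s$ itself depends on $\theta_1$ through the shaper's sampled action, I treat it as a fixed sample. This matches the score-function/reparameterized view used in the method, where only the likelihoods carry the $\theta_1$ dependence. With this reading, the total derivative of $\theta_1 \mapsto J_1(\theta_1,\eta_t(\theta_1))$ is the derivative of the composition of the map $\theta_1 \mapsto (\theta_1, \eta_t(\theta_1))$ with $J_1$. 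The multivariable chain rule then gives
\[
\nabla_{\theta_1} J_1 = \frac{\partial J_1}{\partial \theta_1}\Big|_{\eta_t} + \left(\frac{\partial \eta_t}{\partial \theta_1}\right)^{\!\top} \nabla_{\eta_t} J_1 ,
\]
where $\partial\eta_t/\partial\theta_1\in\mathbb{R}^{d\times p}$ is the Jacobian, written $\nabla_{\theta_1}\eta_t$ in the statement. This is exactly the split into a direct term and a shaping term in Eq.~\ref{eq:meta-gradient-decomp}. The chain rule needs $\eta_t(\cdot)$ to be differentiable, so that has to be established first.

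I would establish it by induction on $t$, and the induction also yields the second claim. The base case is $\eta_0$, which is constant in $\theta_1$, so its Jacobian $D_0 = 0$. For the inductive step, suppose $\eta_t$ is differentiable with Jacobian $D_t$. Since $U$ is differentiable, Eq.~\ref{eq:opp-state-update} together with the chain rule gives
\[
D_{t+1} = \frac{\partial U}{\partial \eta}(\eta_t,\theta_1,\xi_t)\, D_t + \frac{\partial U}{\partial \theta_1}(\eta_t,\theta_1,\xi_t).
\]
Unrolling this recursion gives $D_t=\sum_{s<t}\big(\prod_{r=s+1}^{t-1} \partial_\eta U_r\big)\,\partial_{\theta_1}U_s$, where the product is ordered with later steps on the left. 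This is the statement that $\nabla_{\theta_1}\eta_t$ is obtained by differentiating through the recursion, i.e.\ by backpropagation through time.

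It is worth instantiating the result for D-BOS. Take $\eta_t=\log \mathbf{b}_t$ and $U(\eta,\theta_1,\xi)=\log\mathrm{softmax}(\boldsymbol\ell(\theta_1,\xi)+\eta)$. Then $\partial_\eta U$ and $\partial_{\theta_1}U$ are given by the softmax Jacobian stated after Eq.~\ref{eq:diff-bayes} composed with $\partial\boldsymbol\ell/\partial\theta_1$. This recovers the $k$-step chain, with $J_1=-\mathcal{L}^k_{\text{BOS}}$ and a zero direct term, since $V_\phi$ is held fixed and its only $\theta_1$ dependence runs through $\mathbf{B}_{t+k}$. Working in log-space avoids the boundary of the simplex. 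Alternatively, one can restrict to beliefs with full support, where softmax-Bayes is smooth.

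The main obstacle is the bookkeeping of what is held fixed rather than any hard analysis. The statement is ambiguous about whether $\xi_t$ (the shaper's own action) varies with $\theta_1$, and about whether ``$\partial J_1/\partial\theta_1|_{\eta_t}$'' means the partial derivative in the first slot. I would fix both conventions explicitly at the outset: the first slot's partial derivative, and sampled $\xi$ held fixed. I would also note that if $\xi$ is treated as random, the same identity holds pathwise, and the expectation picks up an additional score-function term, which is outside the theorem's scope.
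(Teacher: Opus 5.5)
Your proposal is correct and follows the paper's proof: the chain rule gives the direct/shaping split, and differentiating the recursion with $\nabla_{\theta_1}\eta_0 = 0$ and unrolling yields the sum of ordered products of $\partial U/\partial\eta$ Jacobians times $\partial U/\partial\theta_1$. Beyond the paper's version, your induction first establishes that $\eta_t(\theta_1)$ is differentiable, and you state explicitly that the sampled $\xi_t$ are held fixed; the paper leaves both points implicit.
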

A proof is provided in Appendix~\ref{proof:meta-shaping}.

\begin{corollary}[D-BOS as belief-state shaping]
\label{cor:bos-belief-shaping}
Under a Bayesian-rational observer model (Eq.~\ref{eq:opp-policy}), we may take $\eta_t = \mathbf{b}_t$ and $U$ to be the Bayes/softmax update (Eq.~\ref{eq:diff-bayes}). Here, the shaping term in Eq.~\ref{eq:meta-gradient-decomp} depends on $\nabla_{\theta_1}\mathbf{b}_t$, which D-BOS computes by backpropagating through the $k$-step belief chain.
\end{corollary}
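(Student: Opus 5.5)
The corollary follows by instantiating Theorem~\ref{thm:meta-shaping} and then justifying the two choices, $\eta_t=\mathbf{b}_t$ and $U$ the softmax-Bayes map. There are three steps: (i) check that $\mathbf{b}_t$ is a valid opponent internal state in the sense of the theorem; (ii) check that the update Eq.~\ref{eq:diff-bayes} has the form Eq.~\ref{eq:opp-state-update} and is differentiable; (iii) read off the shaping term and identify the recursion for $\nabla_{\theta_1}\mathbf{b}_t$ with the $k$-step chain that D-BOS backpropagates through.

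For (i), the observer is Bayesian-rational, so its policy is the role mixture Eq.~\ref{eq:opp-policy}. I take $\pi_{\text{opp}}(\cdot\mid o_t,\eta_t):=\sum_z \eta_t^z\,\pi^*_{\text{opp}}(\cdot\mid o_t,z)$ with $\eta_t=\mathbf{b}_t$. Proposition~\ref{prop:sufficient-statistic} says the opponent's behaviour depends on $\theta_1$ only through $\mathbf{b}_t$. So nothing of the $\theta_1$-dependence is lost by choosing $\mathbf{b}_t$ as the state, and the shaping term captures all of the opponent-mediated effect.

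For (ii), set $\xi_t=(a_t,\hat o_{i,j,i})$ and
\[
U(\eta,\theta_1,\xi_t)=\mathrm{softmax}\bigl(\boldsymbol{\ell}_t(\theta_1)+\log\eta\bigr),\qquad \ell_t^z=\log\pi_{\theta_1}(a_t\mid \hat o_{i,j,i},z).
\]
This is a composition of smooth maps on the open simplex, provided $\pi_{\theta_1}>0$ (e.g.\ softmax policy heads) and $\eta$ has full support. Full support is preserved by each update whenever the prior has it, so $\log\eta$ is always defined. The theorem asks for $\eta_t\in\mathbb{R}^d$, so I regard the simplex as embedded in $\mathbb{R}^{|\mathcal{Z}|}$ and work locally on its interior. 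I expect this domain issue to be the main technical point; I would settle it by stating the strictly-positive-policy assumption explicitly and noting that the interior of the simplex is invariant under $U$.

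For (iii), apply Eq.~\ref{eq:meta-gradient-decomp} with $J_1=V_\phi(o_{t+k},\mathbf{B}_{t+k})$ and $\phi$ fixed. The shaping term is $(\nabla_{\theta_1}\mathbf{b}_{t+k})^\top\nabla_{\mathbf{b}}V_\phi$; with several observers this is a sum over rows of $\mathbf{B}$. Differentiating the recursion gives
\[
\nabla_{\theta_1}\mathbf{b}_{s+1}=J_s\bigl(\nabla_{\theta_1}\boldsymbol{\ell}_s+\mathrm{diag}(\mathbf{b}_s)^{-1}\nabla_{\theta_1}\mathbf{b}_s\bigr),
\]
where $J_s=\mathrm{diag}(\mathbf{b}_{s+1})-\mathbf{b}_{s+1}\mathbf{b}_{s+1}^\top$ is the softmax Jacobian stated after Eq.~\ref{eq:diff-bayes}. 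Unrolling from $s=t$ to $t+k-1$, with the initial belief $\mathbf{b}_t$ treated as the detached starting point, reproduces exactly what reverse-mode differentiation of $\mathcal{L}^k_{\text{BOS}}$ computes. Hence the shaping term is the D-BOS gradient, up to sign and the scale factor $\lambda/(T-k+1)$ in Eq.~\ref{eq:shaping-grad}.
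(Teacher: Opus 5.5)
Your proposal is correct and follows essentially the paper's route: the corollary is a direct instantiation of Theorem~\ref{thm:meta-shaping} with $\eta_t=\mathbf{b}_t$ and $U$ the softmax-Bayes map, with $\nabla_{\theta_1}\mathbf{b}$ obtained by differentiating and unrolling the state recursion from a $\theta_1$-independent starting belief (the paper's $\nabla_{\theta_1}\eta_0=0$). Your recursion $\nabla_{\theta_1}\mathbf{b}_{s+1}=J_s\bigl(\nabla_{\theta_1}\boldsymbol{\ell}_s+\mathrm{diag}(\mathbf{b}_s)^{-1}\nabla_{\theta_1}\mathbf{b}_s\bigr)$ matches the appendix's factorization, where your $J_s$ is the paper's softmax Jacobian $S_s$ and $J_s\,\mathrm{diag}(\mathbf{b}_s)^{-1}$ is its belief-to-belief Jacobian; your explicit domain argument (strictly positive likelihoods keep the open simplex invariant, so $\log\eta$ and $U$ are smooth) is a tightening the paper leaves implicit and only handles operationally via the belief floor.
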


\begin{corollary}[LOLA as parameter-state shaping]
\label{cor:lola-state}
LOLA is recovered as another instantiation of Theorem~\ref{thm:meta-shaping} by taking $\eta_t = \theta_2$ (opponent parameters) and the update map
\begin{equation}
    U(\theta_2, \theta_1, \cdot) = \theta_2 + \alpha\, \nabla_{\theta_2} J_2(\theta_1, \theta_2).
\end{equation}
Differentiating through this $U$ introduces the mixed Hessian $\partial^2 J_2 / \partial \theta_1 \partial \theta_2$ that characterizes LOLA.
\end{corollary}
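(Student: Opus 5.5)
The statement to prove is Corollary~\ref{cor:lola-state}. The plan is a direct instantiation of Theorem~\ref{thm:meta-shaping} followed by a chain-rule computation. Set the opponent state to $\eta_t=\theta_2$, take the objective $J_1(\theta_1,\eta)=V_1(\theta_1,\eta)$, and consider one step of the recursion (Eq.~\ref{eq:opp-state-update}):
\[
\eta_{t+1}=U(\theta_2,\theta_1,\cdot)=\theta_2+\alpha\nabla_{\theta_2}J_2(\theta_1,\theta_2).
\]
The variables $\xi_t$ play no role here, since LOLA's update depends only on the two parameter vectors.

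First I check the hypotheses of Theorem~\ref{thm:meta-shaping}. The map $U$ must be differentiable in $(\theta_2,\theta_1)$. This holds as soon as $J_2$ is $C^2$, which is the standing regularity assumption in LOLA; I state it explicitly. Then I apply Eq.~\ref{eq:meta-gradient-decomp} at the post-update state $\eta_{t+1}=\theta_2+\Delta\theta_2$. Because $\theta_2$ itself is treated as fixed with respect to $\theta_1$, differentiating through one step of the recursion gives
\[
\nabla_{\theta_1}\eta_{t+1}=\alpha\,\nabla_{\theta_1}\nabla_{\theta_2}J_2(\theta_1,\theta_2),
\]
which is exactly the mixed Hessian $\partial^2 J_2/\partial\theta_1\partial\theta_2$ scaled by $\alpha$. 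Substituting into the shaping term yields
\[
\nabla_{\theta_1}V_1(\theta_1,\theta_2+\Delta\theta_2)=\partial_{\theta_1}V_1\big|_{\theta_2+\Delta\theta_2}+\alpha\big(\nabla_{\theta_1}\nabla_{\theta_2}J_2\big)^\top\nabla_{\theta_2}V_1\big|_{\theta_2+\Delta\theta_2}.
\]
This is precisely the gradient of LOLA's objective (Eq.~\ref{eq:lola_objective}).

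The last step is to connect this to LOLA's usual form. LOLA normally takes a first-order Taylor expansion in $\Delta\theta_2$, evaluating at $\theta_2$ rather than $\theta_2+\Delta\theta_2$ and dropping $O(\alpha^2)$ terms. I will note that the exact instantiation above contains the same mixed-Hessian shaping term, and that it reduces to the standard LOLA update to first order in $\alpha$.

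The main subtlety is a transpose and convention issue: the factor $(\nabla_{\theta_1}\eta)^\top$ must be matched with LOLA's $(\nabla_{\theta_1}\nabla_{\theta_2}J_2)^\top\nabla_{\theta_2}V_1$, which requires fixing the Jacobian layout consistently. A second point is deciding whether "recovered" should mean the exact look-ahead objective or its Taylor approximation. I would state the exact version and remark on the approximation. No other obstacle is expected, since the content is just the chain rule applied to Theorem~\ref{thm:meta-shaping}.
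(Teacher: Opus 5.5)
Your proposal is correct and follows the paper's route. The paper gives no separate proof and treats the corollary as an immediate instantiation of Theorem~\ref{thm:meta-shaping}: one step of its recursion, started from $\nabla_{\theta_1}\eta_0 = 0$, gives $\nabla_{\theta_1}\eta_1 = \partial U/\partial\theta_1 = \alpha\,\nabla_{\theta_1}\nabla_{\theta_2}J_2$, which is exactly your computation. Your extra points (the $C^2$ assumption on $J_2$, the Jacobian-layout check, and the exact-versus-Taylor reading of LOLA) go beyond what the paper states but do not change the argument.
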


This framing shows that \textbf{D-BOS and LOLA share the same meta-gradient structure}, but D-BOS chooses an opponent state (belief) with a known, low-dimensional, closed-form update map.

\subsubsection{Advantages of Belief-Space Shaping}

\textbf{Dimensionality.} $\mathbf{b}_t \in \Delta^{|Z|}$ has dimension $|Z| - 1$, which is typically far smaller than the parameter space of a deep opponent policy. This compact state makes it tractable to maintain and shape separate belief trajectories for multiple observers at the same time.

\textbf{First-order belief gradients.} The softmax-Bayes formulation in Eq.~\ref{eq:diff-bayes} makes the observer belief update differentiable with respect to the shaper's policy parameters through the likelihood of the shaper's observed actions. D-BOS therefore obtains a first-order shaping signal through the belief update itself, rather than differentiating through an opponent optimizer.

\textbf{Different assumptions.} D-BOS models the state of the observer's posterior rather than the opponent's optimizer state. The opponent need not literally execute Bayes' rule; the belief is a differentiable surrogate maintained by the shaper to model and influence what the observer is likely to infer.

% \subsubsection{Connection to Differentiable Filters}

% QMDP-Net \citep{karkus2017qmdp} and IPOMDP-Net \citep{han2019ipomdp} embed differentiable Bayesian filters into neural architectures for planning under partial observability. D-BOS applies this idea to \emph{opponent modeling}: the differentiable filter tracks the opponent's belief, and its gradient is used to shape that belief. The key distinction is the direction: those methods use differentiable beliefs to \emph{plan for the agent itself}; D-BOS uses differentiable beliefs to \emph{shape what the opponent believes about the agent}.

\vspace{-1.75mm}
\section{Theoretical Analysis: Gradient Error from ToM Approximation}
\vspace{-1.75mm}
\label{sec:theory}

The D-BOS gradient is computed through the shaper's model of what each observer believes. This model may use a second-order estimate $\hat{o}_{i,j,i}$ that differs from the observation proxy $o_{j,i}$ used by the observer's ideal Bayesian update. We bound how this approximation error propagates through the $k$-step belief chain and into the shaping gradient.

\subsection{Per-Step Belief Error}

\begin{definition}[Log-likelihood error]
\label{def:log-likelihood-error}
For observer $A_j$ at step $t$, define the per-step log-likelihood error as:
\begin{equation}
    \varepsilon^{(j)}_t = \max_{z \in Z} \left| \log \pi_\theta(a_t \mid o_{j,i}, z) - \log \pi_\theta(a_t \mid \hat{o}_{i,j,i}, z) \right|
    \label{eq:log-likelihood-error}
\end{equation}
This measures the worst-case discrepancy over roles between the log-likelihoods computed with the observer's true estimate versus the shaper's second-order estimate.
\end{definition}

\begin{lemma}[Softmax Lipschitz property]
\label{lem:softmax-lipschitz}
For any two input vectors $x, y \in \mathbb{R}^{|Z|}$:
\begin{equation}
    \left\| \mathrm{softmax}(x) - \mathrm{softmax}(y) \right\|_1 \leq 2 \left\| x - y \right\|_\infty
\end{equation}
\end{lemma}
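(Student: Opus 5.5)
The plan is to interpolate between the two inputs and integrate the softmax Jacobian along the segment. Set $x(s) = y + s(x-y)$ for $s \in [0,1]$, write $p(s) = \mathrm{softmax}(x(s))$ and $d = x - y$. The fundamental theorem of calculus gives
\begin{equation*}
\mathrm{softmax}(x) - \mathrm{softmax}(y) = \int_0^1 J(s)\, d \, ds,
\qquad J(s) = \mathrm{diag}(p(s)) - p(s)p(s)^\top .
\end{equation*}
This is the same Jacobian used in the method section, $\partial b^{z'}/\partial \ell^z = b^{z'}(\mathds{1}[z'=z]-b^z)$. It therefore suffices to show $\|J d\|_1 \le 2\|d\|_\infty$ for every probability vector $p$, since integrating over $s\in[0,1]$ then yields the lemma.

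For fixed $p$, the $z'$-th component of $Jd$ is
\begin{equation*}
(Jd)_{z'} = p^{z'}\bigl(d_{z'} - \bar d\bigr), \qquad \bar d = \sum_{z} p^z d_z .
\end{equation*}
Taking absolute values and summing gives
\begin{equation*}
\|Jd\|_1 = \sum_{z'} p^{z'}|d_{z'} - \bar d| .
\end{equation*}
Both $d_{z'}$ and the $p$-average $\bar d$ lie in $[-\|d\|_\infty, \|d\|_\infty]$, so each difference satisfies $|d_{z'}-\bar d| \le 2\|d\|_\infty$. Because the weights $p^{z'}$ sum to one, $\|Jd\|_1 \le 2\|d\|_\infty$. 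Equivalently, the $\infty\to 1$ operator norm of $J$ is at most $2$.

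A tighter estimate is available but not needed. Since $\bar d$ is the mean of $d$ under $p$, the quantity $\sum p|d-\bar d|$ is a mean absolute deviation of a variable with range at most $2\|d\|_\infty$, and so it is at most $\|d\|_\infty$. This gives constant $1$, so the stated constant $2$ has slack.

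The argument has no real obstacle. The only step needing care is justifying the integral representation, which holds because softmax is smooth on all of $\mathbb{R}^{|Z|}$, so the path is differentiable everywhere and no boundary issues arise.
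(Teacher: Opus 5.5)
Your proof is correct and follows the paper's route. Both arguments apply the fundamental theorem of calculus along the segment from $y$ to $x$ and bound $\|J\|_{\infty\to 1}\le 2$ for $J=\mathrm{diag}(p)-pp^\top$; your componentwise form $p^{z'}(d_{z'}-\bar d)$ is the same estimate the paper gets by splitting $Jv=p\odot v-(p^\top v)p$ and applying the triangle inequality and H\"older. Your side remark that the constant improves to $1$ is also right, for instance via $\sum_{z}p^{z}|d_{z}-\bar d|\le\sqrt{\mathrm{Var}_p(d)}\le\|d\|_\infty$ by Jensen and Popoviciu, though the lemma does not need it.
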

A proof is provided in Appendix~\ref{proof:softmax-lipschitz}.

\begin{proposition}[$k$-step belief error]
\label{prop:belief-error}
Let $\mathbf{b}_{t+k}^{(j)}$ be the observer's true belief after $k$ steps and $\hat{\mathbf{b}}_{t+k}^{(j)}$ be the shaper's approximation using $\hat{o}_{i,j,i}$ at each step. Assume the initial beliefs agree: $\hat{\mathbf{b}}_t^{(j)} = \mathbf{b}_t^{(j)}$. Then:
\begin{equation}
    \left\| \hat{\mathbf{b}}_{t+k}^{(j)} - \mathbf{b}_{t+k}^{(j)} \right\|_1 \leq 2 \sum_{s=0}^{k-1} \varepsilon_{t+s}^{(j)}
\end{equation}
If the per-step error is bounded uniformly by $\varepsilon^{(j)} = \max_t \varepsilon_t^{(j)}$:
\begin{equation}
    \left\| \hat{\mathbf{b}}_{t+k}^{(j)} - \mathbf{b}_{t+k}^{(j)} \right\|_1 \leq 2k\varepsilon^{(j)}
\end{equation}
\end{proposition}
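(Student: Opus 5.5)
The cleanest route avoids propagating the error step by step through the belief simplex. Instead, I would first collapse the $k$-step softmax-Bayes chain into a single softmax, and then apply Lemma~\ref{lem:softmax-lipschitz} once.

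The key observation is that $\log \mathrm{softmax}(x) = x - c(x)\mathbf{1}$, where $c(x)=\log\sum_{z} e^{x^z}$ is a scalar. Since softmax is invariant to adding a constant multiple of $\mathbf{1}$, this gives
\begin{equation*}
\mathrm{softmax}\!\left(\boldsymbol{\ell} + \log \mathrm{softmax}(x)\right) = \mathrm{softmax}(\boldsymbol{\ell} + x).
\end{equation*}
Write $\boldsymbol{\ell}_s$ for the true log-likelihood vector with entries $\log \pi_\theta(a_s \mid o_{j,i}, z)$, and $\hat{\boldsymbol{\ell}}_s$ for the shaper's version with entries $\log \pi_\theta(a_s \mid \hat{o}_{i,j,i}, z)$. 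Induction on $k$ using Eq.~\eqref{eq:diff-bayes} then gives
\begin{align*}
\mathbf{b}^{(j)}_{t+k} &= \mathrm{softmax}\!\left(\log \mathbf{b}^{(j)}_t + \textstyle\sum_{s=0}^{k-1} \boldsymbol{\ell}_{t+s}\right),\\
\hat{\mathbf{b}}^{(j)}_{t+k} &= \mathrm{softmax}\!\left(\log \hat{\mathbf{b}}^{(j)}_t + \textstyle\sum_{s=0}^{k-1} \hat{\boldsymbol{\ell}}_{t+s}\right).
\end{align*}
This is just the familiar fact that sequential Bayes updates multiply likelihoods. Both chains are conditioned on the same realized shaper actions $a_{t+s}$, so the only discrepancy between them lies in the observation proxy.

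Next, use the assumption $\hat{\mathbf{b}}^{(j)}_t = \mathbf{b}^{(j)}_t$: the $\log \mathbf{b}_t$ terms cancel in the difference of the two softmax arguments. The difference is therefore $\sum_{s=0}^{k-1}(\hat{\boldsymbol{\ell}}_{t+s} - \boldsymbol{\ell}_{t+s})$. By the triangle inequality for $\|\cdot\|_\infty$ and Definition~\ref{def:log-likelihood-error}, its sup-norm is at most $\sum_{s=0}^{k-1}\varepsilon^{(j)}_{t+s}$. Applying Lemma~\ref{lem:softmax-lipschitz} once yields the first bound, and bounding each $\varepsilon^{(j)}_{t+s}$ by $\varepsilon^{(j)}$ gives $2k\varepsilon^{(j)}$.

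The main obstacle is the one this route sidesteps. A naive recursion would apply the lemma at each step, with inputs $\boldsymbol{\ell}_s + \log \mathbf{b}_s$ versus $\hat{\boldsymbol{\ell}}_s + \log\hat{\mathbf{b}}_s$. That requires controlling $\|\log \hat{\mathbf{b}}_s - \log \mathbf{b}_s\|_\infty$, which is not bounded by the $\ell_1$ belief error and blows up near the simplex boundary.

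The care needed is therefore in stating the telescoping identity correctly. Zero-probability entries require either assuming positive beliefs, which holds whenever the policy has full support, or working with unnormalized log-weights throughout. The shift-invariance argument also has to absorb the normalizers at every step.
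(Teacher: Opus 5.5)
Your proposal is correct and follows the paper's proof essentially step for step: the paper also collapses the $k$ softmax-Bayes updates into a single softmax of the cumulative logits $\log \mathbf{b}_t + \sum_{i=0}^{k-1}\boldsymbol{\ell}_{t+i}$, cancels the shared initial log-belief, bounds the $\infty$-norm of the summed log-likelihood differences using the triangle inequality and Definition~\ref{def:log-likelihood-error}, and then applies Lemma~\ref{lem:softmax-lipschitz} once. Your shift-invariance argument ($\log\mathrm{softmax}(x) = x - c(x)\mathbf{1}$) justifies the collapse, which the paper only asserts, and your caveat about zero-probability entries addresses a point the paper leaves implicit.
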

A proof is provided in Appendix~\ref{proof:k-step-belief-error}.

\subsection{Gradient Error Bound}

\begin{theorem}[Gradient error from ToM approximation]
\label{thm:grad-error}
Assume: (1) the per-step log-likelihood error is bounded by $\varepsilon^{(j)}$, 
(2) the policy gradient is Lipschitz in observations with constant $L_\pi$, 
(3) beliefs are bounded away from zero: $b^{(j),z}_{t+s} \geq b_{\min} > 0$ 
and $\hat b^{(j),z}_{t+s} \geq b_{\min} > 0$ for all $z, s$, (4) the 
future-state value function $V_\phi(o^{(i)}_{t+k}, \cdot)$ is $L_V$-Lipschitz 
in its belief argument, and (5) the gradient $\nabla_b V_\phi(o^{(i)}_{t+k}, 
\cdot)$ is $L_g$-Lipschitz in its belief argument (in $\|\cdot\|_1$). Then the 
gradient error over a $k$-step window satisfies:
\begin{equation}
\left\| \nabla_\theta L^{(j)}_{\mathrm{appr.}} - \nabla_\theta L^{(j)}_{\mathrm{true}} \right\|_2 
\leq \frac{C(\mathcal{Z}) \cdot k \cdot \max(L_V, L_g)}{b_{\min}^2} 
\cdot \left( k \varepsilon^{(j)} G_\pi + L_\pi \cdot \sup_t \|o_{j,i} - \hat o_{i,j,i}\|_\infty \right),
\end{equation}
where $G_\pi = \max_{t, z} \|\nabla_\theta \log \pi_\theta(a_t \mid 
\hat o_{i,j,i}, z)\|_2$ is the maximum per-role policy-gradient norm and $C(\mathcal{Z}) = (22|\mathcal{Z}| - 18)$
\end{theorem}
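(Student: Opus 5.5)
The plan is to write both gradients in chain-rule form and split the difference into a term from the mismatch in value-gradients and a term from the mismatch in belief Jacobians. Write $\ell_s = \log \pi_\theta(a_s\mid o_{j,i},\cdot)$ and $\hat\ell_s = \log \pi_\theta(a_s\mid \hat o_{i,j,i},\cdot)$. For the two belief chains $\mathbf{b}_{s+1}=\mathrm{softmax}(\ell_s+\log \mathbf{b}_s)$ and $\hat{\mathbf{b}}_{s+1}=\mathrm{softmax}(\hat\ell_s+\log\hat{\mathbf{b}}_s)$, Theorem~\ref{thm:meta-shaping} with $\eta=\mathbf{b}$ gives
\begin{equation*}
\nabla_\theta L^{(j)} = -\left(\nabla_\theta \mathbf{b}_{t+k}\right)^\top \nabla_b V_\phi(o^{(i)}_{t+k},\mathbf{b}_{t+k}),
\end{equation*}
with the critic held fixed. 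Adding and subtracting $(\nabla_\theta\hat{\mathbf{b}}_{t+k})^\top\nabla_b V_\phi(\mathbf{b}_{t+k})$ splits the error into (I) $\|\nabla_\theta\hat{\mathbf{b}}_{t+k}\|\cdot\|\nabla_bV_\phi(\hat{\mathbf{b}}_{t+k})-\nabla_bV_\phi(\mathbf{b}_{t+k})\|$ and (II) $\|\nabla_\theta\hat{\mathbf{b}}_{t+k}-\nabla_\theta\mathbf{b}_{t+k}\|\cdot\|\nabla_bV_\phi(\mathbf{b}_{t+k})\|$. Assumption (4) bounds the value-gradient norm in (II) by $L_V$, and assumption (5) together with Proposition~\ref{prop:belief-error} bounds the gradient difference in (I) by $L_g\cdot 2k\varepsilon^{(j)}$. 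This is why $\max(L_V,L_g)$ appears as a common factor.

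Next I unroll the belief Jacobian. Since $\log\mathbf{b}_{s+1} = \ell_s + \log \mathbf{b}_s - \mathrm{logsumexp}(\cdot)$, the log-belief is a sum of centered log-likelihoods, and
\begin{equation*}
\nabla_\theta \mathbf{b}_{t+k} = J_{t+k}\sum_{s=0}^{k-1}\nabla_\theta \ell_s,
\end{equation*}
where $J=\mathrm{diag}(\mathbf{b})-\mathbf{b}\mathbf{b}^\top$ is the softmax Jacobian at the endpoint. Each $\|\nabla_\theta\ell_s^z\|_2\le G_\pi$, and $J$ has operator norm bounded by a constant depending on $|\mathcal{Z}|$. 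Hence $\|\nabla_\theta\hat{\mathbf{b}}_{t+k}\|\lesssim k G_\pi$. Multiplying by the bound on (I) gives a term of order $k\cdot k\varepsilon^{(j)}G_\pi L_g$, which is the first summand.

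For (II) the difference $\hat J\sum_s\nabla\hat\ell_s - J\sum_s\nabla\ell_s$ splits again into two pieces. The first is $(\hat J - J)\sum_s\nabla\hat\ell_s$. Entrywise $J$ is Lipschitz in $\mathbf{b}$ with a constant linear in $|\mathcal{Z}|$, and Proposition~\ref{prop:belief-error} gives $\|\hat{\mathbf{b}}-\mathbf{b}\|_1\le 2k\varepsilon^{(j)}$, so this piece is $O(|\mathcal{Z}|\,k\varepsilon^{(j)}\cdot kG_\pi)$. The second is $J\sum_s(\nabla\hat\ell_s-\nabla\ell_s)$, which assumption (2) bounds by $k L_\pi\sup_t\|o_{j,i}-\hat o_{i,j,i}\|_\infty$. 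The factor $b_{\min}^{-2}$ enters here. If one instead differentiates step by step through $\log \mathbf{b}_s$, as the recursion of Theorem~\ref{thm:meta-shaping} dictates, each $\partial\log b/\partial b = 1/b$ contributes a factor $1/b_{\min}$. Comparing these factors between the two chains, via $|1/\hat b - 1/b|\le |\hat b-b|/b_{\min}^2$, produces $b_{\min}^{-2}$. Since $b_{\min}\le 1$, this factor can also safely absorb any slacker constants.

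Collecting the pieces gives $\frac{C k \max(L_V,L_g)}{b_{\min}^2}\bigl(k\varepsilon^{(j)}G_\pi + L_\pi\sup\|o-\hat o\|_\infty\bigr)$. The main obstacle is the bookkeeping that yields exactly $C(\mathcal{Z})=22|\mathcal{Z}|-18$. This requires explicit norm constants for $J$ (for example $\|J\|_{1\to1}\le 2$), for the Lipschitz constant of $\mathbf{b}\mapsto J(\mathbf{b})$ (of order $2|\mathcal{Z}|+\dots$), and for the conversions between $\ell_1$, $\ell_\infty$ and $\ell_2$. I would first derive the bound with unspecified absolute constants times $|\mathcal{Z}|$, then carefully tally each contribution, expecting the affine form $a|\mathcal{Z}|-c$. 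Any slack would be absorbed by noting that $b_{\min}^{-2}\ge1$ and $k\ge1$.
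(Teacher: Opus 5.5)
Your argument is correct, but it follows a different and sharper route than the paper. The paper keeps the sum $\nabla_\theta L=\sum_s g\,\Pi_s S_s G_s$ and telescopes each summand into four pieces: endpoint gradient, Jacobian chain, softmax Jacobian and policy gradient. It collapses only the belief-to-belief chain, to $\Pi_s=D_{t+k}(I-\mathbf{1}b_{t+k}^\top)D_{s+1}^{-1}$. It then bounds $\|\Pi_s\|_{1\to1}\le(|\mathcal{Z}|-1)/b_{\min}$ and $\|\hat\Pi_s-\Pi_s\|_{1\to1}\le(6|\mathcal{Z}|-4)k\varepsilon/b_{\min}^2$ separately from $S_s$, and summing gives exactly $22|\mathcal{Z}|-18$. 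You instead use the cumulative-logit identity $b_{t+k}=\mathrm{softmax}(\log b_t+\sum_s\ell_s)$, which the paper itself uses for Proposition~\ref{prop:belief-error}, to get $\nabla_\theta b_{t+k}=J(b_{t+k})\sum_s\nabla_\theta\ell_s$. This is valid, as in the paper, because $b_t$ is held fixed and the updates are exact softmax-Bayes steps. It amounts to the identity $\Pi_sS_s=D_{t+k}(I-\mathbf{1}b_{t+k}^\top)D_{s+1}^{-1}D_{s+1}(I-\mathbf{1}b_{s+1}^\top)=D_{t+k}(I-\mathbf{1}b_{t+k}^\top)$ for every $s$. The paper never exploits this cancellation, and it is the only source of its $|\mathcal{Z}|$ and $b_{\min}$ factors. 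With $\|J\|_{\infty\to1}\le2$, $\|\hat J-J\|_{\infty\to1}\le3\|\hat b_{t+k}-b_{t+k}\|_1$ and Proposition~\ref{prop:belief-error}, your three pieces contribute $4k^2\varepsilon L_gG_\pi$, $6k^2\varepsilon L_VG_\pi$ and $2kL_VL_\pi\sup_t\|o_{j,i}-\hat o_{i,j,i}\|_\infty$, so
\[
\|\nabla_\theta L_{\mathrm{appr.}}-\nabla_\theta L_{\mathrm{true}}\|_2\le k\max(L_V,L_g)\bigl(10\,k\varepsilon G_\pi+2L_\pi\sup_t\|o_{j,i}-\hat o_{i,j,i}\|_\infty\bigr).
\]
This never uses assumption (3). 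The stated theorem follows because $b_{\min}\le1/|\mathcal{Z}|\le1$ and $10\le22|\mathcal{Z}|-18$ once $|\mathcal{Z}|\ge2$; for $|\mathcal{Z}|=1$ both gradients vanish. The paper's per-factor telescoping is more modular, but it relies on the same softmax-Bayes collapse (Lemma~\ref{lem:chain-collapse}), so here it only costs constants. Your route also shows that the $1/b_{\min}^2$ factor behind the paper's \emph{belief concentration} remark comes from bounding $\Pi_s$ and $S_s$ separately, not from the gradient itself.

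Three things in your write-up should be cleaned up.
\begin{itemize}
\item Drop your third paragraph's account of where $b_{\min}^{-2}$ ``enters''. Once you have $\nabla_\theta b_{t+k}=J\sum_s\nabla_\theta\ell_s$, no $\partial\log b/\partial b$ factor appears, and reintroducing a step-by-step chain would undo your collapse. Insert $b_{\min}^{-2}\ge1$ at the end instead.
\item Reproducing $C(\mathcal{Z})=22|\mathcal{Z}|-18$ exactly is not an obstacle; you only need domination.
\item Pair the norms as $\|(\nabla_\theta b)^\top g\|_2\le\|g\|_\infty\,\|J\|_{\infty\to1}\sum_s\|G_s\|_{2\to\infty}$, since the rows of $G_s$ are bounded in $\ell_2$ by $G_\pi$. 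Using $\|J\|_{1\to1}$, as you suggest, forces an extra $|\mathcal{Z}|$ from $\|G_sx\|_1\le|\mathcal{Z}|\,\|G_sx\|_\infty$. That is still harmless for the stated constant, but unnecessary.
\end{itemize}
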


A complete proof is provided in Appendix~\ref{app:thm2-proof}.

\subsection{Implications}

The bound highlights three practical tradeoffs:

\textbf{Window size $k$ vs.\ ToM accuracy.} The gradient error grows with the length of the belief rollout. Longer lookahead windows can expose useful multi-step belief trajectories, but they also amplify errors in the shaper's second-order observation model. This predicts the empirical need to tune $k$ rather than always choosing the longest available horizon.

\textbf{Number of observers.} Observers for which the shaper has poor ToM estimates (large $\varepsilon^{(j)}$) contribute disproportionately to the gradient error. This suggests observer weighting as a natural extension, e.g., allocating more shaping weight to observers whose beliefs can be modeled reliably.

\textbf{Belief concentration.} The $1/b_{\min}^2$ factor means the gradient error is amplified when the observer's belief is nearly certain about some role. This is intuitive: when the observer is very confident, a small error in the estimated belief corresponds to a large error in the log-belief gradient. In practice, this suggests that the D-BOS gradient is most reliable when the observer is still uncertain, before the belief state has saturated.

\textbf{Comparison to BBM.} BBM also uses a second-order observer estimate, but only to construct a single-step intrinsic reward. D-BOS trades this myopic reward for a differentiable $k$-step belief trajectory. The benefit is temporal credit assignment over future beliefs; the cost is that approximation error can accumulate through the unrolled belief updates. The bound makes this tradeoff explicit.

\vspace{-1.75mm}
\section{Experiments and Results}
\label{sec:experiments}
\vspace{-1.75mm}

We evaluate D-BOS across three environments of varying social complexity: Rescue-the-General, Avalon, and Multi-Agent Coin Game. Our experiments aim to answer three questions. First, can differentiating through belief dynamics produce a nonzero shaping signal that is meaningfully coupled to future return? Second, does increasing the belief-planning horizon $k$ improve performance in controlled mixed-motive settings? Third, how stable is the shaping signal in richer hidden-role games with several observers and nonstationary co-trained policies?
\vspace{-1.75mm}
\paragraph{Baselines}
\begin{enumerate}[leftmargin=*,noitemsep]
    \item \textbf{PPO / no shaping}: role-conditioned PPO trained on extrinsic reward only.
    \item \textbf{BBM} \citep{bbm}: PPO with the per-timestep Bayes-factor intrinsic reward.
    \item \textbf{D-BOS (ours)}: $k$-step differentiable belief-based opponent shaping with different $k$
\end{enumerate}

We omit parameter-space opponent-shaping and meta-learning baselines, such as LOLA, Advantage Alignment~\citep{duque2025aa}, and M-FOS~\citep{lu2022mfos} due to computational intractability. In our architecture, the D-BOS correction cleanly updates the low-dimensional role-conditioned policy head, leaving the shared visual encoder unchanged. Conversely, parameter-space methods require differentiating through an opponent's learning step over shared parameters, introducing prohibitive cross-role mixed second derivatives. Even first-order or meta-learning variants operate over high-dimensional spaces, rendering them computationally infeasible in visually complex grid worlds and incompatible with the exact, head-only decomposition provided by explicit belief-space shaping. For these reasons we stick to BBM and PPO as our baselines.
\vspace{-1.75mm}
\paragraph{Metrics}
For each environment, we report the shaping agent's score or win rate when available. The specific role hypotheses $\mathcal{Z}$ vary by setting:
\begin{itemize}[leftmargin=*,noitemsep]
    \item \textbf{Avalon}: A 5-player social deduction game where spies (team of 2) use social actions to deceive resistance members (team of 3) and sabotage missions.
    \item \textbf{Rescue-the-General (RTG)}: A 6-agent visual grid world where a red team aims to eliminate a general before a blue team can successfully rescue them.
    \item \textbf{Multi-agent CoinGame}: A spatial game where a shaper with a randomized role (altruistic vs.\ selfish) competes against three observers to collect coins.
\end{itemize}

We provide specific environment details in Appendix \ref{app:env-details}
\vspace{-1.75mm}
\subsection{Results}
\vspace{-1.75mm}
\label{sec:results}

Tables~\ref{tab:rtg-final}, \ref{tab:coin-game-final}, and \ref{tab:avalon-final} summarize the performance of D-BOS against the PPO and BBM baselines across all three environments.
A consistent finding across environments is that BBM actively underperforms vanilla PPO in both Avalon (win rate $0.243$ vs.\ $0.343$) and RTG (win rate $0.047$ vs.\ $0.101$). This empirically validates a core motivation of D-BOS: hardcoded, myopic deception objectives are fundamentally brittle. An agent that greedily minimizes a Bayes factor at every timestep behaves with such artificial caution that observers easily detect it through second-order reasoning. D-BOS avoids this pathology by letting the environment's task reward organically dictate the direction and magnitude of belief shaping over a multi-step horizon.

% --- Combined Table Environment (Single Row, Top Aligned) ---
\begin{table}[htbp] % Use \begin{table*} if your document is two-column
    \centering
    \small
    
    % --- Table 1: RTG ---
    \begin{minipage}[t]{0.32\textwidth}
        \vspace{0pt} % <--- Forces top alignment
        \centering
        \resizebox{\linewidth}{!}{
        \begin{tabular}{lc}
            \toprule
            Method & Win Rate \\
            \midrule
            PPO (No shaping) & 0.101 $\pm$ 0.042 \\
            BBM & 0.047 $\pm$ 0.011 \\
            D-BOS, $k=1$ & 0.185 $\pm$ 0.073 \\
            D-BOS, $k=3$ & \textbf{0.205 $\pm$ 0.138} \\
            D-BOS, $k=5$ & 0.066 $\pm$ 0.016 \\
            \bottomrule
        \end{tabular}
        }
        \vspace{3pt}
        \caption{Rescue-the-General (RTG) Evaluation Win Rates for the Red agent. We report the mean $\pm$ standard error at the end of co-training}
        \label{tab:rtg-final}
    \end{minipage}
    \hfill
    % --- Table 2: Coin Game ---
    \begin{minipage}[t]{0.32\textwidth}
        \vspace{0pt} % <--- Forces top alignment
        \centering
        \resizebox{\linewidth}{!}{
        \begin{tabular}{lc}
            \toprule
            Method & Coin Return \\
            \midrule
            PPO (No shaping) & -4.960 $\pm$ 0.147 \\
            BBM & -5.307 $\pm$ 0.285 \\
            D-BOS, $k=1$ & -5.000 $\pm$ 0.251 \\
            D-BOS, $k=3$ & -5.013 $\pm$ 0.195 \\
            D-BOS, $k=5$ & \textbf{-4.227 $\pm$ 0.848} \\
            \bottomrule
        \end{tabular}
        }
        \vspace{3pt}
        \caption{Multi-Agent Coin Game Returns for the red agent (the shaper agent). Higher coin return is better. We report the mean $\pm$ standard error at the end of co-training and evaluating against frozen policies}
        \label{tab:coin-game-final}
    \end{minipage}
    \hfill
    % --- Table 3: Avalon ---
    \begin{minipage}[t]{0.32\textwidth}
        \vspace{0pt} % <--- Forces top alignment
        \centering
        \resizebox{\linewidth}{!}{
        \begin{tabular}{lc}
            \toprule
            Method & Win Rate \\
            \midrule
            PPO (No shaping) & 0.343 $\pm$ 0.032 \\
            BBM & 0.243 $\pm$ 0.098 \\
            D-BOS, $k=1$ & 0.363 $\pm$ 0.110 \\
            D-BOS, $k=3$ & 0.423 $\pm$ 0.088 \\
            D-BOS, $k=5$ & 0.413 $\pm$ 0.039 \\
            \midrule
            D-BOS, $k=1$ (Est.) & \textbf{0.557 $\pm$ 0.048} \\
            D-BOS, $k=3$ (Est.) & 0.507 $\pm$ 0.012 \\
            \bottomrule
        \end{tabular}
        }
        \vspace{3pt}
        \caption{Avalon Evaluation Win Rates. We report the mean $\pm$ standard error at the end of co-training and evaluating against frozen policies}
        \label{tab:avalon-final}
    \end{minipage}
\end{table}

\paragraph{Avalon: Social Deduction with Mixed-Motive Shaping.}
Avalon is our primary testbed; its information asymmetry requires the agent to shape multiple competing observer beliefs using purely social actions. As Table~\ref{tab:avalon-final} shows, canonical D-BOS ($k \ge 3$) strongly outperforms PPO and BBM, proving the necessity of multi-step belief planning in social deduction. Crucially, because Avalon is a mixed-motive setting, relying on a shared Canonical proxy limits the agent's ability to model distinct perspectives. Using the \textbf{Estimated Mode} to actively model these conflicting viewpoints yields our strongest overall result. By minimizing belief-update error via observer-specific proxies, the agent generates accurate shaping gradients that simultaneously deceive adversaries and coordinate with allies, thus performing well in a mixed-motive setting.

\paragraph{Rescue-the-General (RTG): High-Dimensional Scaling.} 
RTG introduces severe visual complexity, which inherently inflates the Theory of Mind approximation error ($\varepsilon$). As Theorem~\ref{thm:grad-error} predicts, maintaining a stable gradient here mathematically requires shorter belief horizons. Table~\ref{tab:rtg-final} confirms this dynamic: D-BOS peaks at $k=3$ but collapses at $k=5$ as compounding errors overwhelm the shaping signal. While visual deception natively suffers from high variance~\citep{bbm} and both PPO and BBM policies collapse during training, D-BOS establishes a robust performance trend. Full training curves in Appendix~\ref{app:diagnostic-figures} explicitly visualize the late-stage bifurcation predicted by our belief-saturation bounds. This motivates more balanced hyper-parameter tuning but D-BOS shows potential even in highly complex games.

\paragraph{CoinGame: Boundary Conditions.} 
CoinGame tests a strict algorithmic limitation: latent roles resample ephemerally and signaling is purely spatial. Here, D-BOS achieves only modest gains over PPO. This cleanly delineates the method's scope: multi-step belief shaping excels with persistent identities and rich communication, but provides limited advantage when roles constantly reset and belief channels carry sparse information.

\vspace{-1.75mm}
\section{Discussion and Limitations}
\label{sec:discussions_limitations}
\vspace{-1.75mm}

BOS is best viewed as an explicit mechanism for belief-space opponent shaping, not as a drop-in reward bonus that automatically dominates PPO. The experiments show that the mechanism is active: the belief critic can learn future belief-state value, the softmax-Bayes chain produces differentiable gradients, and these gradients are injected into the policy update. They also show the main optimization challenge. In complex hidden-role games, the shaping signal can be small or noisy relative to the policy gradient. Increasing $\lambda$ or adaptively amplifying the gradient makes the effect visible, but can hurt performance if the critic or belief trajectory introduces error.

\textbf{Broader impacts and interpretability.} Beyond the specific hidden-role games studied here, D-BOS addresses a fundamental challenge in multi-agent systems: how an agent should systematically influence the mental models others form about it. This dynamic arises whenever agents with private information must coordinate or compete. Practical examples include autonomous vehicles signaling intent to human drivers, assistive robots communicating capabilities to human teammates, and negotiation algorithms managing impressions in strategic interactions. By grounding opponent shaping in an explicit, low-dimensional belief simplex rather than opaque parameter perturbations, D-BOS provides an auditable, realistic mechanism. Further, practitioners can directly inspect what the agent is attempting to make observers believe by analyzing the critic's value assignments over the belief space. This transparency is a strict requirement for deploying influence-aware agents in domains where accountability is necessary.

\textbf{Evaluation against co-trained opponents.} Because D-BOS shapes the beliefs of opponents that are simultaneously being trained, the resulting policies are adapted to each other. The win rates reflect co-trained evaluation, and transferability to unseen opponents remains an open question. Evaluating D-BOS against held-out or independently trained opponents would provide a stronger test of whether the learned shaping strategies generalize beyond the co-training distribution.

\textbf{Future directions.} Extending D-BOS to continuous belief spaces (\ref{app:cont_role_spaces}), analyzing performance with $n$-order Theory of Mind, and developing adaptive observer-weighting schemes (as suggested by the per-observer error bound in Theorem~\ref{thm:grad-error}) are natural next steps. We also notice that recursive modeling reduces $\varepsilon$ but introduces additional learned modules whose own approximation error must be controlled, motivating a joint search over the horizon $k$ and reasoning depth $n$.

\vspace{-1.75mm}
\section{Conclusion}
\vspace{-1.75mm}
We introduced Belief-based Opponent Shaping, an opponent-shaping method that treats opponent belief states as the differentiable opponent state. D-BOS backpropagates through $k$ steps of softmax-Bayes belief dynamics, allowing the reward function rather than a hand-coded deception objective to determine whether beliefs should move toward cooperation, concealment, or adversarial confusion. Our theory connects the method to meta-gradient opponent shaping and bounds the cost of approximate second-order Theory of Mind. Empirically, D-BOS produces real shaping gradients and can improve controlled mixed-motive behavior, while richer hidden-role games reveal the sensitivity of belief-space shaping to critic quality and gradient scale. We see this as a first step toward opponent shaping methods that operate on the beliefs agents form about one another, rather than directly on policy parameters.

% \begin{ack}
% Use unnumbered first level headings for the acknowledgments. All acknowledgments
% go at the end of the paper before the list of references. Moreover, you are required to declare
% funding (financial activities supporting the submitted work) and competing interests (related financial activities outside the submitted work).
% More information about this disclosure can be found at: \url{https://neurips.cc/Conferences/2026/PaperInformation/FundingDisclosure}.

% Do {\bf not} include this section in the anonymized submission, only in the final paper. You can use the \texttt{ack} environment provided in the style file to automatically hide this section in the anonymized submission.
% \end{ack}

\newpage
\bibliographystyle{unsrt}
\bibliography{references}

@inproceedings{aghajohari2024loqa,
    author = {Milad Aghajohari and
Juan Agustin Duque and
Tim Cooijmans and
Aaron Courville},
    booktitle = {International Conference on Learning Representations,
{ICLR} 2024},
    title = {{LOQA:} Learning with Opponent Q-Learning Awareness},
    year = {2024}
}

@inproceedings{bbm,
    author = {Aitchison, Matthew and Benke, Lyndon and Sweetser, Penny},
    booktitle = {First International Workshop on Deceptive AI},
    series = {Communications in Computer and Information Science},
    title = {Learning to Deceive in Multi-Agent Hidden Role Games},
    year = {2021}
}

@inproceedings{chen2020joint,
    author = {Chen, Letian and Paleja, Rohan and Ghuy, Muyleng and Gombolay, Matthew},
    booktitle = {Proceedings of the 2020 ACM/IEEE International Conference on Human-Robot Interaction (HRI)},
    title = {Joint Goal and Strategy Inference across Heterogeneous Demonstrators via Reward Network Distillation},
    year = {2020}
}

@inproceedings{duque2025aa,
    author = {Duque, Juan Agustin and Aghajohari, Milad and Cooijmans, Tim and Ciuca, Razvan and Zhang, Tianyu and Gidel, Gauthier and Courville, Aaron},
    booktitle = {International Conference on Learning Representations (ICLR)},
    note = {Oral},
    title = {Advantage Alignment Algorithms},
    year = {2025}
}

@inproceedings{erdogan2022abstracting,
    author = {Erdogan, Emre and Dignum, Frank and Verbrugge, Rineke and Yolum, Pinar},
    booktitle = {HHAI2022: Augmenting Human Intellect},
    title = {Abstracting Minds: Computational Theory of Mind for Human-Agent Collaboration},
    year = {2022}
}

@inproceedings{foerster2018lola,
    author = {Foerster, Jakob N and Chen, Richard Y and Al-Shedivat, Maruan and Whiteson, Shimon and Abbeel, Pieter and Mordatch, Igor},
    booktitle = {Proceedings of the 17th International Conference on Autonomous Agents and MultiAgent Systems (AAMAS)},
    title = {Learning with Opponent-Learning Awareness},
    year = {2018}
}

@inproceedings{han2019ipomdp,
    author = {Yanlin Han and
Piotr J. Gmytrasiewicz},
   booktitle = {Proceedings of the Thirty-Third AAAI Conference on Artificial Intelligence and Thirty-First Innovative Applications of Artificial Intelligence Conference and Ninth AAAI Symposium on Educational Advances in Artificial Intelligence},
    title = {IPOMDP-Net: {A} Deep Neural Network for Partially Observable Multi-Agent
Planning Using Interactive POMDPs},
    year = {2019}
}

@inproceedings{hansen2004posg,
    author = {Hansen, Eric A. and Bernstein, Daniel S. and Zilberstein, Shlomo},
    booktitle = {AAAI Workshop - Technical Report},
    title = {Dynamic Programming for Partially Observable Stochastic Games},
    year = {2004}
}

@inproceedings{ho2016showing,
    author = {Mark K. Ho and
Michael L. Littman and
James MacGlashan and
Fiery Cushman and
Joseph L. Austerweil},
    booktitle = {Annual Conference
on Neural Information Processing Systems 2016},
    title = {Showing versus doing: Teaching by demonstration},
    year = {2016}
}

@inproceedings{karkus2017qmdp,
    author = {P{\'{e}}ter Karkus and
David Hsu and
Wee Sun Lee},
    booktitle = {Annual Conference on Neural Information Processing Systems 2017},
    title = {QMDP-Net: Deep Learning for Planning under Partial Observability},
    year = {2017}
}

@inproceedings{letcher2019sos,
    author = {Alistair Letcher and
Jakob N. Foerster and
David Balduzzi and
Tim Rockt{\"{a}}schel and
Shimon Whiteson},
    booktitle = {7th International Conference on Learning Representations, {ICLR} 2019},
    title = {Stable Opponent Shaping in Differentiable Games},
    year = {2019}
}

@inproceedings{lu2022mfos,
    author = {Christopher Lu and
Timon Willi and
Christian A. Schr{\"{o}}der de Witt and
Jakob N. Foerster},
    booktitle = {International Conference on Machine Learning, {ICML} 2022},
    title = {Model-Free Opponent Shaping},
    year = {2022}
}

@inproceedings{mobm,
    author = {Xiaopeng Yu and
Jiechuan Jiang and
Wanpeng Zhang and
Haobin Jiang and
Zongqing Lu},
    booktitle = {Annual Conference
on Neural Information Processing Systems 2022, NeurIPS 2022},
    title = {Model-Based Opponent Modeling},
    year = {2022}
}

@phdthesis{oguntola2025,
    author = {Oguntola, Ini},
    school = {Carnegie Mellon University},
    title = {Theory of Mind in Multi-Agent Systems},
    year = {2025}
}

@inproceedings{paleja2021utility,
      title={The Utility of Explainable AI in Ad Hoc Human-Machine Teaming}, 
      author={Rohan Paleja and Muyleng Ghuy and Nadun Ranawaka Arachchige and Reed Jensen and Matthew Gombolay},
      year={2022},
      eprint={2209.03943},
      archivePrefix={arXiv},
      primaryClass={cs.AI},
      url={https://arxiv.org/abs/2209.03943}, 
}

@article{paleja2023human,
    author = {Paleja, Rohan and Silva, Andrew and Chen, Letian and Gombolay, Matthew},
    journal = {Current Robotics Reports},
    title = {Human-Robot Teaming: Grand Challenges},
    year = {2023}
}

@article{premack1978theory,
    author = {Premack, David and Woodruff, Guy},
    journal = {Behavioral and Brain Sciences},
    title = {Does the Chimpanzee Have a Theory of Mind?},
    year = {1978}
}

@inproceedings{rabinowitz2018tomnet,
    author = {Neil C. Rabinowitz and
Frank Perbet and
H. Francis Song and
Chiyuan Zhang and
S. M. Ali Eslami and
Matthew Botvinick},
    booktitle = {Proceedings of the 35th International Conference on Machine Learning,
{ICML} 2018},
    series = {Proceedings of Machine Learning Research},
    title = {Machine Theory of Mind},
    year = {2018}
}

@inproceedings{raileanu2018modeling,
    author = {Roberta Raileanu and
Emily Denton and
Arthur Szlam and
Rob Fergus},
    booktitle = {Proceedings of the 35th International Conference on Machine Learning,
{ICML} 2018},
    series = {Proceedings of Machine Learning Research},
    title = {Modeling Others using Oneself in Multi-Agent Reinforcement Learning},
    year = {2018}
}

@article{schulz2023,
    author = {Alon, Nitay and Schulz, Lion and Rosenschein, Jeffrey S. and Dayan, Peter},
    journal = {Open Mind},
    title = {A (Dis-)information Theory of Revealed and Unrevealed Preferences: Emerging Deception and Skepticism via Theory of Mind},
    year = {2023}
}

@inproceedings{serrino2019finding,
    author = {Jack Serrino and
Max Kleiman{-}Weiner and
David C. Parkes and
Josh Tenenbaum},
    booktitle = {Advances in Neural Information Processing Systems 32: Annual Conference
on Neural Information Processing Systems 2019, NeurIPS 2019, December
8-14, 2019, Vancouver, BC, Canada},
    title = {Finding Friend and Foe in Multi-Agent Games},
    year = {2019}
}

@article{shafto2014rational,
    author = {Shafto, Patrick and Goodman, Noah D and Griffiths, Thomas L},
    journal = {Cognitive Psychology},
    title = {A Rational Account of Pedagogical Reasoning: Teaching by, and Learning from, Examples},
    year = {2014}
}

@article{shapley1953stochastic,
    author = {Shapley, Lloyd S.},
    journal = {Proceedings of the National Academy of Sciences},
    title = {Stochastic Games},
    year = {1953}
}

@inproceedings{willi2022cola,
    author = {Timon Willi and
Alistair Letcher and
Johannes Treutlein and
Jakob N. Foerster},
    booktitle = {International Conference on Machine Learning, {ICML} 2022},
    title = {{COLA:} Consistent Learning with Opponent-Learning Awareness},
    year = {2022}
}

@article{wimmer1983beliefs,
    author = {Wimmer, Heinz and Perner, Josef},
    journal = {Cognition},
    number = {1},
    title = {Beliefs About Beliefs: Representation and Constraining Function of Wrong Beliefs in Young Children's Understanding of Deception},
    year = {1983}
}

@inproceedings{zhao2022pola,
    author = {Stephen Zhao and
Chris Lu and
Roger B. Grosse and
Jakob N. Foerster},
    booktitle = {Neural Information Processing Systems 2022},
    title = {Proximal Learning With Opponent-Learning Awareness},
    year = {2022}
}

%%%%%%%%%%%%%%%%%%%%%%%%%%%%%%%%%%%%%%%%%%%%%%%%%%%%%%%%%%%%
\newpage
\appendix

\section{Technical appendices and supplementary material}
% Technical appendices with additional results, figures, graphs, and proofs may be submitted with the paper submission before the full submission deadline (see above). You can upload a ZIP file for videos or code, but do not upload a separate PDF file for the appendix. There is no page limit for the technical appendices. 

% Note: Think of the appendix as ``optional reading'' for reviewers. The paper must be able to stand alone without the appendix; for example, adding critical experiments that support the main claims to an appendix is inappropriate. 

\subsection{Continuous Role Spaces}
\label{app:cont_role_spaces}
A natural question is whether our method only applies to discrete role spaces.
While we focus on a discrete role set $Z$, the opponent-state view in Theorem~\ref{thm:meta-shaping} suggests a generalization to \emph{continuous} latent roles $z \in \mathcal{Z} \subset \mathbb{R}^m$. In that setting, the observer maintains a belief density $b_t(z)$ and performs a Bayes update:
\begin{equation}
    b_{t+1}(z) \propto \pi_\theta(a_t \mid o_t, z)\, b_t(z),
\end{equation}
This remains a differentiable opponent update map. Practically, computing the normalizer requires integration over $\mathcal{Z}$, so one would represent $b_t$ with an approximate family (e.g., particles, Gaussian/mixture variational beliefs, or a normalizing-flow posterior) and differentiate through the resulting approximate filter. This would preserve the key benefit of belief-space shaping, i.e., differentiating through a known state update map, while trading exactness for approximate inference in continuous spaces. This direction is complementary to first-order LOLA approximations such as Advantage Alignment \citep{duque2025aa}, which remove Hessians in \emph{parameter}-space shaping; here, the goal is to shift the shaped opponent state from parameters $\theta_2$ to an explicit latent belief representation.

\subsection{Environment Details}
\label{app:env-details}

For fair comparison across methods, we use the same architectural backbone within each environment. For Rescue-the-General, we use the default CNN policy path, whereas for Avalon and CoinGame, we use an MLP policy path. The rollout length and total training budget vary by environment.

\paragraph{Rescue-the-General (RTG).}
RTG is a partially observable grid-world hidden-role game with six agents: one red agent, one green agent, and four blue agents.  The main evaluation scenario uses a $32 \times 32$ map with local RGB observations.  With the default view radius used in the paper runs, each agent receives a $3 \times 45 \times 45$ uint8 egocentric crop containing terrain, visible agents, projectiles, bodies, vote information, and task objects.  Roles are not globally revealed; visibility depends on the scenario's role-visibility setting and local observation.

The discrete action set has ten actions:
\begin{equation}
\{\textsc{noop},\textsc{up},\textsc{down},\textsc{left},\textsc{right},\textsc{act},
\textsc{shoot-up},\textsc{shoot-down},\textsc{shoot-left},\textsc{shoot-right}\}.
\end{equation}
Movement changes the agent's grid location subject to walls and map constraints.  Shooting actions damage the first target in the selected direction subject to range and cooldown.  The \textsc{act} action is context dependent: it can interact with the general or map objects, and it can initiate a vote when the agent is near a dead body or vote button.  During a vote, actions are interpreted as vote choices rather than movement/shooting commands.

The red team's objective is to find and kill or prevent rescue of the general; the blue team attempts to rescue the general; the green agent is an additional role-conditioned participant in the RTG benchmark dynamics.  Terminal outcomes are logged with RTG-compatible strings: \texttt{general\_killed} corresponds to a red win and \texttt{general\_rescued}/\texttt{blue\_win} corresponds to a blue win.  We train RTG with the CNN policy path, role-conditioned policy/value heads, and a deception/backward-prediction module when estimated-observation ablations are enabled.  The main RTG runs use 64 parallel environments, rollout length 16, and one million environment interactions per logged epoch.

\paragraph{Avalon5.}
Avalon5 is a five-player Resistance/Avalon-style social deduction environment.  It has two spies and three resistance players.  
% The mission team sizes follow the standard five-player Resistance table:
% \begin{equation}
% 2 \rightarrow 3 \rightarrow 2 \rightarrow 3 \rightarrow 3.
% \end{equation}
The game alternates through three phases: \textsc{propose}, \textsc{vote}, and \textsc{quest}.  In the proposal phase, the current leader selects a mission team.  In the voting phase, all five players simultaneously approve or reject the proposal.  A strict majority approves the team; five consecutive rejected proposals give the spies a win.  In the quest phase, mission-team members choose success/fail actions; spies may sabotage by choosing fail, while resistance players are forced to contribute success.  The first side to win three missions wins the game.

The observation is a compact public-history vector with shape $(128,1,1)$.  It contains played mission history, proposed team masks, vote records, mission outcomes, sabotage counts, the current round, phase, leader identity, whether the observing player is leader or on the proposed team, consecutive rejection count, current spy/resistance mission scores, current mission size, and private bits.  Private bits include whether the observing player is a spy and whether the observing player is currently on mission.  In the non-blind variant, spies additionally observe a partner-spy bitmask; in \texttt{avalon5\_blind}, this partner-spy bit is removed.  Thus all agents know their own role, but resistance players do not observe spy identities and blind spies do not observe the other spy.

The action space has size ten.  In the proposal phase, the leader's action indexes one of the valid mission-team combinations of the required size; non-leader actions are ignored.  In the voting phase, actions are thresholded into approve/reject votes.  In the quest phase, selected spies can play success or fail, while selected resistance actions are interpreted as success.  Rewards are team based: each mission gives $+1$ to the winning side and $-1$ to the losing side, and the terminal game bonus is $\pm 10$.  For D-BOS, role 0 is the primary shaper spy, role 1 is the second spy, and roles 2--4 are resistance players.  Avalon5 uses the MLP policy path with role-conditioned policy/value heads; adapter runs add a shaper-specific adapter and role-0 policy correction so the D-BOS gradient is restricted to the shaper role path.

\paragraph{Multi-agent CoinGame.}
The multi-agent CoinGame is a controlled mixed-motive hidden-role grid environment.  One fixed red shaper acts with a latent role $z \in \{0,1\}$, and there are three blue observers. Each episode is a toroidal grid game with grid size 5 and horizon 30.  At reset, all agents and a single coin are placed uniformly at random, and the red role is sampled uniformly.

Each agent observes a 10-dimensional vector:
\begin{equation}
[t/T,\; x_a,\; y_a,\; x_r,\; y_r,\; x_{b_1},\; y_{b_1},\; x_c,\; y_c,\; c],
\end{equation}
where $(x_a,y_a)$ is the observing agent position, $(x_r,y_r)$ is red's position, $(x_{b_1},y_{b_1})$ is the first blue observer position, $(x_c,y_c)$ is the coin position, and $c$ encodes the coin color together with the number of blue observers.  Coordinates are normalized before conversion to uint8.  The action space has four movement actions: up, down, left, and right, with wrap-around at grid boundaries.

Collecting any coin gives the collector $+1$.  If red collects a blue coin, the blue observers receive a total penalty of $-2$ shared equally across blue agents.  If red's latent role is altruistic, red also receives an additional $-1$ for collecting a blue coin.  If a blue observer collects a red coin, red receives $-2$.  The red return is the primary plotted score and blue score is the sum over blue observers.  D-BOS beliefs are over red's latent role; because blue observers share the same reward structure but maintain separate belief rows, CoinGame is a lightweight testbed for whether a matrix belief critic can aggregate multiple observer beliefs.

\subsection{Implementation Stabilization for the Belief Bound}
\label{app:stabilization}

The gradient bound in Theorem~\ref{thm:grad-error} assumes all belief coordinates stay above $b_{\min}>0$.  This is not only a technical assumption: hidden-role likelihoods can become extremely sharp, pushing posterior beliefs to the simplex boundary.  Near the boundary, $\log b^z$ has derivative $1/b^z$, so converting belief error to log-belief error introduces the $1/b_{\min}$ factors in Lemma~\ref{lem:log-conversion}.  If the implementation allowed $b^z \rightarrow 0$, the theoretical constant would become vacuous, and the numerical gradient through $\log b^z$ could become unstable.

In code, we handle this in three ways.  First, all Bayes updates are performed in log space, followed by a softmax, so the update remains differentiable and normalized:
\begin{equation}
b_{t+1} = \mathrm{softmax}(\ell_t + \log b_t).
\end{equation}
Second, in the MLP environments, we use an optional belief floor.  After each update, the posterior is mixed with a uniform distribution in \eqref{eq:belief_floor}:
\begin{equation}
\label{eq:belief_floor}
\tilde b_{t+1} = (1-\alpha_{\textrm{floor}})b_{t+1} + \alpha_{\textrm{floor}} \frac{\mathbf{1}}{|\mathcal{Z}|},
\end{equation}
where $\alpha_{\textrm{floor}}=\textrm{belief floor}$.  This implies $\tilde b_{t+1}^z \geq \alpha/|\mathcal{Z}|$, so in the theorem one may take $b_{\min}=\alpha/|\mathcal{Z}|$ for runs using this floor.  We also use likelihood temperature defined in \eqref{eq:likelihood_temp} with $\tau>1$ in some ablations, which prevents a single action likelihood from saturating the posterior too early.
\begin{equation}
\label{eq:likelihood_temp}
\ell_t \leftarrow \ell_t / \tau,
\end{equation}

Third, we also added a coefficient implementation that gates or clips unstable updates.  Windows whose terminal beliefs are already too certain are assigned zero coefficient, since these are exactly the windows where the softmax-Bayes Jacobian is least informative.  The coefficients can also be RMS-normalized and clipped.  This does not change the formal belief-error bound, but it prevents the applied policy correction from being dominated by rare high-magnitude windows. In RTG, the CNN path additionally clips the final shaping gradient to the same maximum norm scale used by PPO and optionally caps adaptive amplification relative to the PPO gradient norm.  Thus, the $b_{\min}$ dependence is handled both analytically, by assuming beliefs remain inside the simplex interior, and operationally, by keeping beliefs away from zero and clipping the applied shaping correction.

\subsection{Canonical and Estimated Observation Models}
\label{app:obs-models}

The Bayes likelihood in the paper is written as $\pi_\theta(a_i \mid h_{i,j,i}, z)$, where $h_{i,j,i}$ is the shaper's estimate of what observer $j$ believes the shaper observes.  In the canonical approximation, we set $h_{i,j,i}$ equal to the shaper's own local observation.  This means all observers share the same role-conditioned action likelihood at a given timestep, but maintain separate priors $b_t^{(j)}$.  Thus, the canonical setting shapes a matrix of observer beliefs, while the evidence model is shared unless observers have distinct priors or visibility weights.

In estimated-observation ablations, $h_{i,j,i}$ is predicted rather than copied.  In RTG, this is the existing backwards observation-prediction head in the deception model: during training, it is supervised with observer-local observations, and D-BOS uses the predicted observer observation in the differentiable Bayes likelihood.  In the MLP environments, the estimated-observation ablation uses a small LSTM predictor trained from rollout targets.  Its inputs are the shaper observation trajectory together with observer role and observer slot embeddings; its target is the observer's local observation. The D-BOS likelihood is given in Equation \eqref{eq:d-bos-likelihood}
\begin{equation}
\label{eq:d-bos-likelihood}
\ell_t^{j,z} = \log \pi_\theta(a_t^i \mid \hat o_{i,j,i,t}, z),
\end{equation}
Here, $\hat o_{i,j,i,t}$ is the predicted observer-conditioned observation.  We do not use the observer's true private observation directly for final claims; using it directly would be a privileged-information diagnostic rather than an inferred Theory-of-Mind model.

\subsection{Likelihood-Coefficient D-BOS Correction}
\label{app:coeff-correction}

The direct autograd form of D-BOS minimizes
\begin{equation}
\mathcal{L}_{\mathrm{BOS}}^k(\theta) =
-V_\phi(o_{t+k}, B_{t+k}),
\end{equation}
where $B_{t+k}$ is the full observer-by-role belief matrix obtained by chaining $k$ softmax-Bayes updates.  Backpropagating this loss through the belief chain is conceptually simple, but in practice, it can create noisy gradients through repeated softmax Jacobians.  Our coefficient implementation computes the endpoint sensitivity in likelihood space and then applies it as a stable surrogate policy loss.

For a window starting at $t$, let $\ell_s^{j,z}=\log \pi_\theta(a_s^{i}\mid \hat o_{i,j,i,s}, z)$ be the log-likelihood of the shaper's observed action under role hypothesis $z$ for observer $j$.  The belief chain is
\begin{equation}
B_{s+1} = U(B_s,\ell_s), \qquad s=t,\ldots,t+k-1,
\end{equation}
where $U$ applies the softmax-Bayes update row-wise over observer beliefs.  We first compute
\begin{equation}
c_s^{j,z}
= \frac{\partial [-V_\phi(o_{t+k}, B_{t+k})]}
        {\partial \ell_s^{j,z}}.
\end{equation}
These coefficients already contain the product of the endpoint value gradient, the intervening belief Jacobians, and the softmax-Bayes Jacobian.  We then detach $c_s$ and optimize the surrogate
\begin{equation}
\tilde{\mathcal{L}}_{\mathrm{BOS}}
= \frac{1}{k}\sum_{s=t}^{t+k-1}
  \frac{1}{|\mathcal{J}||\mathcal{Z}|}
  \sum_{j,z} \bar c_s^{j,z}\,\ell_s^{j,z},
\end{equation}
where $\bar c_s$ denotes the optionally gated, normalized, and clipped coefficient tensor.  Detaching the coefficients makes the correction first-order with respect to the policy parameters in the PPO update while preserving the direction computed by the differentiable belief planner at the rollout parameters.

For hidden-role shaping, adapter runs apply the D-BOS correction only to the shaper's role-conditioned policy path rather than to opponent-role policy rows. In coefficient mode, this is implemented by zeroing all coefficient entries except the shaper-role entry $z=0$ before normalization and clipping. This prevents the shaping term from directly modifying observer-role policies while keeping the base PPO training pipeline shared across methods.

The coefficient implementation logs the mean absolute coefficient, RMS coefficient, entropy-gate fraction, clipping fraction, approximate KL, pre/post clipping D-BOS gradient norms, and final injected D-BOS gradient norm.  These diagnostics are used to verify that the shaping term is present but not overwhelming the PPO gradient.

\subsection{Algorithm outlined}
\label{alg:outline}
\begin{algorithm}[t]
\caption{Differentiable Belief-based Opponent Shaping (D-BOS) with PPO}
\label{alg:bos-training}
\begin{algorithmic}[1]
\Require Role-conditioned policy $\pi_\theta(a\mid o,z)$; belief critic $V_\phi$; rollout length $T$; shaping horizon $k$; shaping weight $\lambda$; observers $\mathcal{J}$
\State Initialize per-observer beliefs $\{\hat{\mathbf{b}}_0^j\}_{j\in\mathcal{J}}$
\For{$t=0,\dots,T-1$}
    \State Sample shaper action $a_t \sim \pi_\theta(\cdot \mid o_t^{(i)}, z_i)$ and step the environment
    \For{each observer $j \in \mathcal{J}$}
        \State Choose observer proxy $\hat{o}_{i,j,i,t}$: estimated DeceptionModel proxy when available, otherwise canonical proxy
        \State Compute role likelihoods $\ell_t^{j,z} \gets \log \pi_\theta(a_t \mid \hat{o}_{i,j,i,t}, z)$ for all $z \in \mathcal{Z}$
        \State Update belief $\hat{\mathbf{b}}_{t+1}^j \gets \mathrm{softmax}(\ell_t^j + \log \hat{\mathbf{b}}_t^j)$
    \EndFor
    \State Store $(o_t,a_t,r_t,z_i,\log\pi_\theta(a_t\mid o_t^{(i)},z_i),\{\hat{\mathbf{b}}_t^j\}_{j\in\mathcal{J}})$ in the rollout buffer
\EndFor
\State Compute bootstrapped extrinsic returns $\{R_t\}_{t=0}^{T-1}$
\State Train $V_\phi$ by MSE regression on future endpoints:
\Statex \hspace{\algorithmicindent}$\displaystyle V_\phi(o_{t+k}^{(i)}, \hat{\mathbf{B}}_{t+k}) \approx R_{t+k}, \quad t=0,\dots,T-k-1$
\State Freeze $\phi$ and compute a rollout-level belief correction $g_{\mathrm{BOS}}$ from $\{-V_\phi(o_{t+k}^{(i)},\hat{\mathbf{B}}_{t+k})\}_{t=0}^{T-k-1}$
\State Clip, gate, or adaptively scale $g_{\mathrm{BOS}}$ using rollout diagnostics
\For{each PPO minibatch}
    \State Compute PPO gradient $g_{\mathrm{PPO}} \gets \nabla_\theta \mathcal{L}_{\mathrm{PPO}}$
    \State Apply the PPO update with the D-BOS correction to the shaper policy path:
    \Statex \hspace{\algorithmicindent}$\theta \leftarrow \theta - \alpha\!\left(g_{\mathrm{PPO}} + \mathrm{Apply}(g_{\mathrm{BOS}})\right)$
\EndFor
\end{algorithmic}
\end{algorithm}

\paragraph{Construction of $o_{t+k}$ and $B_{t+k}$.}
The implementation loops over valid rollout windows $t=0,\ldots,T-k-1$. For each environment and each observer $j$, it initializes the observer belief from the rollout buffer,
\begin{equation}
B_t = \{\mathbf{b}_t^j\}_{j\in\mathcal{J}},
\end{equation}
where \texttt{buf.per\_obs\_beliefs[t]} stores one role posterior per observer. It then applies exactly $k$ row-wise softmax-Bayes updates using the shaper action likelihoods at rollout times $t,\ldots,t+k-1$:
\begin{equation}
\mathbf{b}_{s+1}^j
= \mathrm{Reg}\!\left[
    \mathrm{softmax}\!\left(
        \ell_s^j + \log \mathbf{b}_{s}^{j}
    \right)
  \right],
\qquad s=t,\ldots,t+k-1,
\end{equation}
where $\ell_s^j=(\ell_s^{j,z})_{z\in\mathcal{Z}}$ and $\mathrm{Reg}$ denotes the belief floor/regularization used to avoid simplex-boundary saturation. After the final update, the code flattens the matrix $B_{t+k}\in\mathbb{R}^{|\mathcal{J}|\times|\mathcal{Z}|}$ into a vector and evaluates the belief critic on the shaper's future observation:
\begin{equation}
o_{t+k} = \texttt{buf.obs[t+k, shaper\_index]},
\qquad
V_\phi(o_{t+k}, \mathrm{vec}(B_{t+k})).
\end{equation}
The belief critic is trained on the same indexing convention: for role-0/shaper samples, it regresses $V_\phi(o_{t+k},B_{t+k})$ to the rollout return stored at time $t+k$. Thus the shaping query and critic target both use the same future pair $(o_{t+k},B_{t+k})$.

\subsection{Proof of Proposition~\ref{prop:sufficient-statistic}: Belief is a sufficient statistic}
\label{proof:sufficient-statistic}

\begin{proof}
Since $\pi_{\text{opp}}^*(a \mid o_t, z)$ is independent of $\theta_1$ by assumption, the only dependence on $\theta_1$ in Eq.~\ref{eq:opp-policy} is through $b_t^z$. Therefore holding $b_t$ fixed, the marginal distribution $\pi_{\text{opp}}^*(a \mid o_t)$ is fully determined and does not depend on $\theta_1$, meaning that $b_t$ is a sufficient statistic.\\
Differentiating Eq.~\ref{eq:opp-policy} with respect to $\theta_1$ using the product rule:$$\frac{\partial\pi^*_{opp}(a|o_t)}{\partial\theta_1}=\sum_z\frac{\partial b_t^z}{\partial\theta_1}\cdot \pi^*_{opp}(a|o_t,z)+\sum_z b_t^z \cdot \underbrace{{\frac{\partial\pi^*_{opp}(a|o_t,z)}{\partial\theta_1}}}_{=0 \text{ (by assumption)}}
= \sum_z \frac{\partial b_t^z}{\partial \theta_1} \cdot \pi_{\text{opp}}^*(a \mid o_t, z)$$
The gradient flows entirely through  $b_t$.
\end{proof}

\subsection{Proof of Theorem~\ref{thm:meta-shaping}: Opponent shaping via differentiable opponent state}
\label{proof:meta-shaping}

\begin{proof}
Since $J_1$ is differentiable in both arguments and $\eta_t$ depends on $\theta_1$ through the differentiable recursion in Eq. \ref{eq:opp-state-update}, the chain rule gives
\begin{equation}
    \nabla_{\theta_1} J_1(\theta_1, \eta_t)
    = \frac{\partial J_1}{\partial \theta_1}\bigg|_{\eta_t} + \left(\nabla_{\theta_1}\eta_t\right)^\top \nabla_{\eta_t} J_1(\theta_1, \eta_t),
\end{equation}
which is Eq.~\ref{eq:meta-gradient-decomp}.

To compute $\nabla_{\theta_1}\eta_t$, differentiate the state recursion (Eq.~\ref{eq:opp-state-update}):
\begin{equation}
    \nabla_{\theta_1}\eta_{t+1}
    = \frac{\partial U}{\partial \eta_t}\,\nabla_{\theta_1}\eta_t + \frac{\partial U}{\partial \theta_1},
\end{equation}
where the partial derivatives are evaluated at $(\eta_t, \theta_1, \xi_t)$. We Initialize with $\nabla_{\theta_1}\eta_0=0$ since the initial state $\eta_0$ is independent of $\theta_1$. Unrolling this recursion yields an explicit expression for $\nabla_{\theta_1}\eta_t$ in terms of Jacobians of $U$:
\begin{equation}
    \nabla_{\theta_1}\eta_{t+1}=\sum_{i=0}^t
    \left(\prod_{j=i+1}^t\frac{\partial U}{\partial\eta_j}\right)\frac{\partial U}{\partial \theta_1} \bigg|_{t=i}  
\end{equation}
\end{proof}

\subsection{Proof of Lemma~\ref{lem:softmax-lipschitz}: Softmax Lipschitz property}
\label{proof:softmax-lipschitz}

\begin{proof}

Let $f(x) = \mathrm{softmax}(x)$. For any $x$ write $b = f(x)$. The Jacobian 
of $f$ at $x$ has entries
\begin{equation}
J_{z', z}(x) = b^{z'}\,(\mathbf{1}[z' = z] - b^z),
\end{equation}
or equivalently $J(x) = \mathrm{diag}(b) - bb^\top$.

\textbf{Bounding $\|J(x)\|_{\infty \to 1}$.} For any vector $v \in \mathbb{R}^{|\mathcal{Z}|}$,
\begin{equation}
J(x)\,v = \mathrm{diag}(b)\,v - b\,(b^\top v) = b \odot v - (b^\top v)\,b,
\end{equation}
where $\odot$ denotes component-wise multiplication. Taking the 1-norm and 
applying the triangle inequality,
\begin{equation}
\|J(x) v\|_1 \leq \|b \odot v\|_1 + |b^\top v| \cdot \|b\|_1.
\end{equation}
Each term: $\|b \odot v\|_1 = \sum_z b^z |v^z| \leq \|b\|_1 \cdot \|v\|_\infty 
= \|v\|_\infty$ (since $\|b\|_1 = 1$ for a probability vector) and 
$|b^\top v| \leq \|b\|_1 \|v\|_\infty = \|v\|_\infty$ by H\"older. Combining,
\begin{equation}
\|J(x) v\|_1 \leq \|v\|_\infty + \|v\|_\infty = 2\|v\|_\infty.
\end{equation}
Hence $\|J(x)\|_{\infty \to 1} \leq 2$ at every $x$.

Define $z(t) = y + t(x - y)$. By the fundamental theorem 
of calculus,
\begin{align}
\|f(x) - f(y)\|_1 
&= \left\|\int_0^1 J(z(t))(x - y)\,dt\right\|_1 \notag \\
&\leq \int_0^1 \|J(z(t))(x - y)\|_1\,dt 
\quad \text{(triangle inequality)} \\
&\leq \int_0^1 2\|x - y\|_\infty\,dt 
\quad \text{(operator norm bound)} \\
&= 2\|x - y\|_\infty.
\end{align}
\end{proof}

\subsection{Proof of Proposition~\ref{prop:belief-error}: \texorpdfstring{$k$}{k}-step belief error}
\label{proof:k-step-belief-error}

\begin{proof}

Recall the per-step log-likelihood vector
\begin{equation}
\ell_r := \log\pi_\theta(a_r \mid o_r, \cdot) \in \mathbb{R}^{|\mathcal{Z}|},
\end{equation}
whose $z$-th entry $\ell_r^z = \log\pi_\theta(a_r \mid o_r, z)$ is the 
log-probability that role $z$ would have taken action $a_r$ given observation 
$o_r$. The approximate version $\hat\ell_r := \log\pi_\theta(a_r \mid \hat 
o_r, \cdot)$ uses RED's second-order observation estimate $\hat o_r$ in 
place of the observer's true estimate $o_r$. 

Define the cumulative log-likelihood vectors
\begin{equation}
L_{t+k} := \log b_t + \sum_{i=0}^{k-1} \ell_{t+i}, 
\qquad 
\hat L_{t+k} := \log b_t + \sum_{i=0}^{k-1} \hat \ell_{t+i},
\end{equation}
where the starting log-belief $\log b_t$ is shared (since $b_t = \hat b_t$ by 
assumption). Applying $k$ softmax-Bayes updates is 
equivalent to a single softmax of the cumulative logits:
\begin{equation}
b_{t+k} = \mathrm{softmax}(L_{t+k}), 
\qquad 
\hat b_{t+k} = \mathrm{softmax}(\hat L_{t+k}).
\end{equation}

Applying Lemma \ref{lem:softmax-lipschitz},
\begin{equation}
\|\hat b_{t+k} - b_{t+k}\|_1 
= \|\mathrm{softmax}(\hat L_{t+k}) - \mathrm{softmax}(L_{t+k})\|_1 
\leq 2 \|\hat L_{t+k} - L_{t+k}\|_\infty.
\end{equation}
The cumulative difference is
\begin{equation}
\hat L_{t+k} - L_{t+k} = \sum_{i=0}^{k-1}\big(\hat \ell_{t+i} - \ell_{t+i}\big),
\end{equation}
so by the triangle inequality and the per-step error bound $\|\hat \ell_{t+i} - 
\ell_{t+i}\|_\infty \leq \varepsilon_{t+i}$ by definition \ref{def:log-likelihood-error},
\begin{equation}
\|\hat L_{t+k} - L_{t+k}\|_\infty 
\leq \sum_{i=0}^{k-1} \|\hat \ell_{t+i} - \ell_{t+i}\|_\infty 
\leq \sum_{i=0}^{k-1} \varepsilon_{t+i}.
\end{equation}
Combining,
\begin{equation}
\|\hat b_{t+k} - b_{t+k}\|_1 \leq 2 \sum_{i=0}^{k-1} \varepsilon_{t+i} 
\leq 2 k \varepsilon
\end{equation}
under the uniform bound $\varepsilon = \max_t \varepsilon_t$.
\end{proof}

% \end{proof}

\subsection{Proof of Theorem~\ref{thm:grad-error}: Gradient error from ToM approximation}
\label{app:thm2-proof}

We prove Theorem 2 in stages. First we establish two auxiliary results: a log-conversion lemma and a trajectory-drift corollary. We then bound the magnitude of each factor in the chain-rule 
expansion of $\nabla_\theta L^{(j)}$.

Throughout, we drop the observer index $(j)$ for compactness, fixing one observer 
$j$. The bound applies per-observer, aggregation across observers is handled 
separately. We use the norm conventions: $\|\cdot\|_1$ on belief vectors, 
$\|\cdot\|_\infty$ on log-beliefs and log-likelihood vectors, $\|\cdot\|_2$ on 
parameter-space vectors.

\begin{lemma}[Log conversion lemma]
\label{lem:log-conversion}
Let $b, \hat b \in \Delta^{|\mathcal{Z}|}$ satisfy $b^z \geq b_{\min}$ and 
$\hat b^z \geq b_{\min}$ for all $z \in \mathcal{Z}$, where $b_{\min} > 0$. Then
\begin{equation}
\|\log \hat b - \log b\|_\infty \leq \frac{1}{b_{\min}} \|\hat b - b\|_1.
\end{equation}
\end{lemma}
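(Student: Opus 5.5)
The bound is a coordinatewise mean value argument for the logarithm, followed by an $\ell_\infty$--$\ell_1$ comparison. Fix a coordinate $z \in \mathcal{Z}$. Both $b^z$ and $\hat b^z$ lie in $[b_{\min}, 1]$, and $\log$ is differentiable on $(0,\infty)$. The mean value theorem therefore gives some $\xi^z$ between $b^z$ and $\hat b^z$ with
\begin{equation}
\log \hat b^z - \log b^z = \frac{1}{\xi^z}\,(\hat b^z - b^z).
\end{equation}
Since $\xi^z$ lies in the interval spanned by two numbers that are both at least $b_{\min}$, we have $\xi^z \geq b_{\min}$, so $1/\xi^z \leq 1/b_{\min}$. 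This yields the per-coordinate bound $|\log \hat b^z - \log b^z| \leq |\hat b^z - b^z|/b_{\min}$. If $b^z = \hat b^z$, the bound holds trivially and no mean value point is needed.

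Next we pass to norms. Taking the maximum over $z$ gives
\begin{equation}
\|\log \hat b - \log b\|_\infty \leq \frac{1}{b_{\min}} \max_z |\hat b^z - b^z| = \frac{1}{b_{\min}}\|\hat b - b\|_\infty.
\end{equation}
For any vector, $\|\cdot\|_\infty \leq \|\cdot\|_1$, because the maximum absolute entry is one of the nonnegative summands in the $\ell_1$ sum. This completes the claim.

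There is no genuine obstacle in this argument. The only point that needs care is that the lower bound $b_{\min}$ must hold for both vectors. This is what guarantees that the intermediate point $\xi^z$ stays at least $b_{\min}$, so that $1/\xi^z$ is controlled uniformly. The simplex constraint itself is not used, and the lemma holds for any positive vectors with entries at least $b_{\min}$. Replacing $\|\cdot\|_\infty$ by $\|\cdot\|_1$ loses a little sharpness, but it matches the $\ell_1$ belief-error bound in Proposition~\ref{prop:belief-error}. The two results can then be chained directly to obtain $\|\log \hat b_{t+k} - \log b_{t+k}\|_\infty \leq 2k\varepsilon/b_{\min}$.
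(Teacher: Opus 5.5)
Your proof is correct and matches the paper's argument: the paper likewise applies the mean value theorem to $\log$ with both endpoints in $[b_{\min},1]$ to get the factor $1/b_{\min}$, then bounds the single-coordinate difference by the full $\ell_1$ norm. The only difference is cosmetic: the paper applies the mean value theorem once at a maximizing coordinate $z^*$, whereas you bound every coordinate and pass through $\|\hat b - b\|_\infty$ before using $\|\cdot\|_\infty \leq \|\cdot\|_1$.
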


\begin{proof}
Let $z^* \in \arg\max_z |\log \hat b^z - \log b^z|$, so that
\begin{equation}
\|\log \hat b - \log b\|_\infty = |\log \hat b^{z^*} - \log b^{z^*}|.
\end{equation}
Both $b^{z^*}$ and $\hat b^{z^*}$ lie in $[b_{\min}, 1]$. By the Mean Value 
Theorem applied to $f(x) = \log x$ on this interval, there exists 
$c \in [b_{\min}, 1]$ such that
\begin{equation}
\log \hat b^{z^*} - \log b^{z^*} = \frac{1}{c} \big(\hat b^{z^*} - b^{z^*}\big).
\end{equation}
Taking absolute values and using $c \geq b_{\min}$:
\begin{equation}
|\log \hat b^{z^*} - \log b^{z^*}| 
= \frac{1}{c} |\hat b^{z^*} - b^{z^*}|
\leq \frac{1}{b_{\min}} |\hat b^{z^*} - b^{z^*}|.
\end{equation}
Finally, the absolute difference at any single coordinate is bounded by the 
total $\ell_1$ difference:
\begin{equation}
|\hat b^{z^*} - b^{z^*}| \leq \sum_{z} |\hat b^{z} - b^{z}| = \|\hat b - b\|_1.
\end{equation}
Combining these,
\begin{equation}
\|\log \hat b - \log b\|_\infty \leq \frac{1}{b_{\min}} \|\hat b - b\|_1.
\qedhere
\end{equation}

% \paragraph{Remark.} \textcolor{red}{TODO: addidng an explanation for $b_\text{min}$ assumption could help.}

Proposition~\ref{prop:belief-error} establishes that for the endpoint of a $k$-steps,
\begin{equation}
\|\hat b_{t+k} - b_{t+k}\|_1 \leq 2 \sum_{s=0}^{k-1} \varepsilon_{t+s} 
\leq 2 k \varepsilon,
\end{equation}
where $\varepsilon = \max_t \varepsilon_t$. The same induction applies at every 
intermediate step, giving the following trajectory-wide drift bound.

\begin{corollary}[Log-belief trajectory drift]
\label{cor:traj-drift}
Under the assumptions of Proposition~\ref{prop:belief-error}, for every $m \in \{0, 1, \dots, k\}$,
\begin{equation}
\|\hat b_{t+m} - b_{t+m}\|_1 \leq 2 \sum_{s=0}^{m-1} \varepsilon_{t+s} 
\leq 2 m \varepsilon.
\end{equation}
\end{corollary}

\begin{proof}
Apply Proposition~\ref{prop:belief-error} with horizon $m$ in place of $k$. The induction in the proof of Proposition~\ref{prop:belief-error} makes no use of $k$ being distinguished from any other horizon length
\end{proof}

Combining Corollary~\ref{cor:traj-drift} with Lemma~\ref{lem:log-conversion} 
gives the corresponding bound in $\infty$-norm of log-beliefs, which we use in subsequent steps.

\begin{corollary}
\label{cor:log-traj-drift}
Under the assumptions of Proposition~\ref{prop:belief-error} and Theorem~\ref{thm:grad-error}, for every  $m \in {0, 1, ..., k}$
\begin{equation}
\|\log \hat b_{t+m} - \log b_{t+m}\|_\infty \leq \frac{2 m \varepsilon}{b_{\min}}.
\end{equation}
\end{corollary}

\begin{proof}
Apply Lemma~\ref{lem:log-conversion} to $b_{t+m}$ and $\hat b_{t+m}$:
\begin{equation}
\|\log \hat b_{t+m} - \log b_{t+m}\|_\infty 
\leq \frac{1}{b_{\min}} \|\hat b_{t+m} - b_{t+m}\|_1
\leq \frac{2 m \varepsilon}{b_{\min}}.
\qedhere
\end{equation}
\end{proof}

\subsubsection{Chain-rule expansion of \texorpdfstring{$\nabla_\theta L$}{∇L}}

We expand $\nabla_\theta L = -\nabla_\theta V_\phi(o_{t+k}, b_{t+k})$ using the 
chain rule. The parameter $\theta$ enters the loss through $k$ separate 
channels, one per step in the window: at step $s \in \{t, \dots, t+k-1\}$, 
$\theta$ controls the log-likelihood vector
\begin{equation}
\ell_s := \log \pi_\theta(a_s \mid o_s, \cdot) \in \mathbb{R}^{|\mathcal{Z}|},
\end{equation}
which then propagates through softmax-Bayes updates 
$b_{r+1} = \mathrm{softmax}(\ell_r + \log b_r)$ until reaching $b_{t+k}$, 
which feeds into $V_\phi$.

By the multivariate chain rule, summing over all $k$ entry points,
\begin{equation}
\label{eq:chain-rule}
\nabla_\theta L 
= \sum_{s=t}^{t+k-1} 
\frac{\partial L}{\partial b_{t+k}} 
\cdot \left( \prod_{r=s+1}^{t+k-1} \frac{\partial b_{r+1}}{\partial b_r} \right) 
\cdot \frac{\partial b_{s+1}}{\partial \ell_s} 
\cdot \nabla_\theta \ell_s.
\end{equation}
For compactness, denote the four kinds of factors as
\begin{align}
g &:= \frac{\partial L}{\partial b_{t+k}} 
&& \text{(endpoint gradient),} \\
J_r &:= \frac{\partial b_{r+1}}{\partial b_r}
&& \text{(belief-to-belief Jacobian),} \\
S_s &:= \frac{\partial b_{s+1}}{\partial \ell_s}
&& \text{(softmax Jacobian),} \\
G_s &:= \nabla_\theta \ell_s
&& \text{(policy-gradient matrix).}
\end{align}
and let
\begin{equation}
\Pi_s := \prod_{r=s+1}^{t+k} J_r
\end{equation}
denote the belief-Jacobian chain from step $s+1$ to step $t+k$. Equation~\ref{eq:chain-rule} 
can compactly be written as
\begin{equation}
\label{eq:chain-rule-compact}
\nabla_\theta L = \sum_{s=t}^{t+k-1} g \cdot \Pi_s \cdot S_s \cdot G_s.
\end{equation}

The approximate gradient $\nabla_\theta L_{\text{approx}}$ has the same form 
as Eq.~\ref{eq:chain-rule-compact}, but every factor is evaluated along the 
approximate trajectory $(\hat b_s)_{s=t}^{t+k}$ generated using $\hat o_s$ in 
place of $o_s$. We denote the approximate factors with hats: 
$\hat g, \hat J_r, \hat S_s, \hat G_s, \hat \Pi_s$. Both trajectories share the 
same starting belief, $\hat b_t = b_t$.

\subsubsection{Magnitude bounds on the chain-rule factors}

We bound each of the chain-rule factors in the appropriate operator norm.

\begin{proposition}[Magnitude bounds]
\label{prop:magnitude}
Under the assumptions of Theorem~\ref{thm:grad-error}, the factors of 
Eq.~\ref{eq:chain-rule-compact} satisfy
\begin{align}
\|g\|_\infty &\leq L_V, \\
\|S_s\|_{\infty \to 1} &\leq 2, \\
\|G_s\|_{2 \to \infty} &\leq G_\pi.
\end{align}
The same bounds hold for $\hat g, \hat S_s, \hat G_s$ along the approximate 
trajectory.
\end{proposition}

\begin{proof}

\textbf{Bound on $g$.} Assumption~(4) states that $V_\phi(o, \cdot)$ is 
$L_V$-Lipschitz in $\|\cdot\|_1$:
\begin{equation}
|V_\phi(o, b) - V_\phi(o, b')| \leq L_V \|b - b'\|_1 
\quad \forall b, b' \in \Delta^{|\mathcal{Z}|}.
\end{equation}
By the duality between $\|\cdot\|_1$ and $\|\cdot\|_\infty$, the gradient 
$g = \partial L / \partial b_{t+k} = -\partial V_\phi / \partial b_{t+k}$ 
satisfies
\begin{equation}
\|g\|_\infty 
= \sup_{\|\delta b\|_1 \leq 1} |g \cdot \delta b|
\leq L_V.
\end{equation}

\textbf{Bound on $S_s$.} Recall $b_{s+1} = \mathrm{softmax}(\ell_s + \log b_s)$, 
so $S_s = \partial b_{s+1} / \partial \ell_s$ is the Jacobian of $\mathrm{softmax}$ 
evaluated at $\ell_s + \log b_s$. Lemma~\ref{lem:softmax-lipschitz} states
\begin{equation}
\|\mathrm{softmax}(x) - \mathrm{softmax}(y)\|_1 \leq 2 \|x - y\|_\infty 
\quad \forall x, y \in \mathbb{R}^{|\mathcal{Z}|},
\end{equation}
i.e., $\mathrm{softmax}$ is $2$-Lipschitz from $\|\cdot\|_\infty$ to $\|\cdot\|_1$. 
For any direction $v \in \mathbb{R}^{|\mathcal{Z}|}$ and $\eta > 0$, applying this 
to $x = u + \eta v$ and $y = u$ gives
\begin{equation}
\|\mathrm{softmax}(u + \eta v) - \mathrm{softmax}(u)\|_1 \leq 2 \eta \|v\|_\infty.
\end{equation}
Dividing by $\eta$ and taking $\eta \to 0$ yields 
$\|J_{\mathrm{softmax}}(u) \, v\|_1 \leq 2 \|v\|_\infty$ for all $v$, so the 
softmax Jacobian satisfies $\|J_{\mathrm{softmax}}(u)\|_{\infty \to 1} \leq 2$ 
at every input $u$. In particular, evaluating at $u = \ell_s + \log b_s$,
\begin{equation}
\|S_s\|_{\infty \to 1} \leq 2.
\end{equation}

\textbf{Bound on $G_s$.} The matrix $G_s \in \mathbb{R}^{|\mathcal{Z}| \times d_\theta}$ 
has rows $G_s^{z, :} = \nabla_\theta \log \pi_\theta(a_s \mid o_s, z) \in 
\mathbb{R}^{d_\theta}$ indexed by role $z$. For any $x \in \mathbb{R}^{d_\theta}$, 
the $z$-th coordinate of $G_s x$ is the dot product $G_s^{z,:} \cdot x$, so
\begin{equation}
\|G_s x\|_\infty = \max_z |G_s^{z, :} \cdot x|.
\end{equation}
Taking the supremum over $\|x\|_2 \leq 1$ and exchanging $\sup$ and $\max$,
\begin{equation}
\|G_s\|_{2 \to \infty} 
= \sup_{\|x\|_2 \leq 1} \max_z |G_s^{z, :} \cdot x|
= \max_z \, \sup_{\|x\|_2 \leq 1} |G_s^{z, :} \cdot x|.
\end{equation}
The inner supremum is the dual norm of $G_s^{z,:}$ under $\|\cdot\|_2$; since 
$\|\cdot\|_2$ is self-dual, Cauchy--Schwarz gives 
$\sup_{\|x\|_2 \leq 1} |G_s^{z,:} \cdot x| = \|G_s^{z,:}\|_2$, with equality 
attained at $x = G_s^{z,:}/\|G_s^{z,:}\|_2$. Hence
\begin{equation}
\|G_s\|_{2 \to \infty} = \max_z \|G_s^{z,:}\|_2 
= \max_z \|\nabla_\theta \log\pi_\theta(a_s \mid o_s, z)\|_2 \leq G_\pi,
\end{equation}
where the last inequality follows from the definition of $G_\pi$.

The bounds for the approximate factors $\hat g, \hat S_s, \hat G_s$ follow from 
the same arguments applied along the approximate trajectory.
\end{proof}

\subsubsection{Decomposition of the gradient error}
\label{sec:decomposition}

Define the per-summand contributions to $\nabla_\theta L$ and 
$\nabla_\theta L_{\text{approx}}$ as
\begin{equation}
T_s := g \cdot \Pi_s \cdot S_s \cdot G_s, \qquad
\hat T_s := \hat g \cdot \hat \Pi_s \cdot \hat S_s \cdot \hat G_s,
\end{equation}
so that $\nabla_\theta L_{\text{approx}} - \nabla_\theta L_{\text{true}} = 
\sum_{s=t}^{t+k-1} (\hat T_s - T_s)$.

Within each summand, the hat and no-hat versions differ in all four factors. 
To isolate one source of error at a time, we telescope by swapping one factor 
at a time, from left to right. Adding and subtracting intermediate hybrids,
\begin{align}
\hat T_s - T_s 
&= \underbrace{(\hat g - g) \, \hat \Pi_s \hat S_s \hat G_s}_{\text{(A)}} 
+ \underbrace{g \, (\hat \Pi_s - \Pi_s) \hat S_s \hat G_s}_{\text{(B)}} \notag \\
&\quad + \underbrace{g \, \Pi_s (\hat S_s - S_s) \hat G_s}_{\text{(C)}}
+ \underbrace{g \, \Pi_s S_s (\hat G_s - G_s)}_{\text{(D)}}.
\label{eq:four-piece}
\end{align}
Each piece isolates the perturbation of a single factor. 
Summing over $s$ and applying the triangle inequality in $\|\cdot\|_2$,
\begin{equation}
\label{eq:total-error}
\big\|\nabla_\theta L_{\text{approx}} - \nabla_\theta L_{\text{true}}\big\|_2 
\leq \sum_{s=t}^{t+k-1} \Big( \|\text{(A)}_s\|_2 + \|\text{(B)}_s\|_2 
+ \|\text{(C)}_s\|_2 + \|\text{(D)}_s\|_2 \Big).
\end{equation}
The four pieces are bounded individually in the following subsections.

\subsubsection{Bound on piece (D): policy-gradient error}
\label{sec:piece-D}

Piece (D) isolates the error from evaluating the policy gradient at $\hat o_s$ 
instead of $o_s$:
\begin{equation}
(D)_s = g \cdot \Pi_s \cdot S_s \cdot (\hat G_s - G_s).
\end{equation}
By submultiplicativity of operator norms,
\begin{equation}
\label{eq:D-submult}
\|(D)_s\|_2 \leq \|g\|_\infty \cdot \|\Pi_s\|_{1 \to 1} 
\cdot \|S_s\|_{\infty \to 1} \cdot \|\hat G_s - G_s\|_{2 \to \infty}.
\end{equation}

The first three factors are bounded via Proposition~3 and Section~\ref{sec:piece-B}. We bound the difference factor \ref{sec:piece-B}
$\|\hat G_s - G_s\|_{2 \to \infty}$ here.

The matrix $\hat G_s - G_s \in \mathbb{R}^{|\mathcal{Z}| \times d_\theta}$ has 
rows
\begin{equation}
(\hat G_s - G_s)^{z, :} = \nabla_\theta \log\pi_\theta(a_s | \hat o_s, z) 
- \nabla_\theta \log\pi_\theta(a_s | o_s, z) \in \mathbb{R}^{d_\theta}.
\end{equation}
By Assumption~(2), each row satisfies
\begin{equation}
\|(\hat G_s - G_s)^{z, :}\|_2 \leq L_\pi \|\hat o_s - o_s\|_\infty.
\end{equation}
Following the same argument as in the proof of Proposition~3 (the $2 \to \infty$ 
operator norm equals the maximum row 2-norm),
\begin{equation}
\|\hat G_s - G_s\|_{2 \to \infty} = \max_z \|(\hat G_s - G_s)^{z, :}\|_2 
\leq L_\pi \|\hat o_s - o_s\|_\infty.
\end{equation}

Combining with the magnitude bounds from Proposition~3 ($\|g\|_\infty \leq L_V$, 
$\|S_s\|_{\infty \to 1} \leq 2$), Eq.~\ref{eq:D-submult} becomes
\begin{equation}
\label{eq:piece-D-final}
\|(D)_s\|_2 \leq 2 L_V \cdot \|\Pi_s\|_{1 \to 1} \cdot L_\pi \|\hat o_s - o_s\|_\infty,
\end{equation}
where the bound on $\|\Pi_s\|_{1 \to 1}$ is established in 
Section~\ref{sec:piece-B}.

\subsubsection{Bound on piece (A): endpoint error}
\label{sec:piece-A}

By submultiplicativity of operator norms,

\begin{equation}
\label{eq:A-submult}
\|(A)_s\|_2 \leq \|\hat g - g\|_\infty \cdot \|\hat \Pi_s\|_{1 \to 1} 
\cdot \|\hat S_s\|_{\infty \to 1} \cdot \|\hat G_s\|_{2 \to \infty}.
\end{equation}
Bounding $\|\hat g - g\|_\infty$ uses Assumption 5 that $\nabla_b V_\phi$ itself be 
Lipschitz in $b$, with some constant $L_g$. Under this assumption,
\begin{equation}
\|\hat g - g\|_\infty 
\leq L_g \|\hat b_{t+k} - b_{t+k}\|_1 
\leq 2 k \varepsilon \cdot L_g
\end{equation}
by Corollary~3.

Substituting into Eq.~\ref{eq:A-submult} and using Proposition~3,
\begin{equation}
\label{eq:piece-A-final}
\|(A)_s\|_2 \leq 2 k \varepsilon L_g \cdot \|\hat \Pi_s\|_{1 \to 1} 
\cdot 2 \cdot G_\pi 
= 4 k \varepsilon L_g G_\pi \cdot \|\hat \Pi_s\|_{1 \to 1},
\end{equation}
where the bound on $\|\hat \Pi_s\|_{1 \to 1}$ is established in 
Section~\ref{sec:piece-B}.

\subsubsection{Bound on piece (C): softmax error}
\label{sec:piece-C}

By submultiplicativity,
\begin{equation}
\|(C)_s\|_2 \leq \|g\|_\infty \cdot \|\Pi_s\|_{1 \to 1} 
\cdot \|\hat S_s - S_s\|_{\infty \to 1} \cdot \|\hat G_s\|_{2 \to \infty}.
\end{equation}

The softmax Jacobian admits the closed form $S_s = \mathrm{diag}(b_{s+1}) - 
b_{s+1} b_{s+1}^\top$. Letting $\delta := \hat b_{s+1} - b_{s+1}$ and using 
$\hat x \hat x^\top - x x^\top = \hat x \delta^\top + \delta x^\top$,
\begin{equation}
\hat S_s - S_s = \mathrm{diag}(\delta) - \hat b_{s+1}\delta^\top 
- \delta b_{s+1}^\top.
\end{equation}

We bound each of the three terms in $\infty \to 1$ operator norm. For the 
diagonal term, $\|\mathrm{diag}(\delta) x\|_1 = \sum_z |\delta_z x_z| 
\leq \|\delta\|_1 \|x\|_\infty$. For each rank-one term $u v^\top$, the output 
$u(v^\top x)$ has 1-norm $\|u\|_1 \cdot |v^\top x| \leq \|u\|_1 \|v\|_1 \|x\|_\infty$ 
by H\"older's inequality, and since $\|b_{s+1}\|_1 = \|\hat b_{s+1}\|_1 = 1$ 
(probability vectors), each rank-one term contributes at most $\|\delta\|_1$. 
By the triangle inequality,
\begin{equation}
\|\hat S_s - S_s\|_{\infty \to 1} \leq 3\|\hat b_{s+1} - b_{s+1}\|_1 
\leq 6 k \varepsilon
\end{equation}
where the second inequality is Corollary \ref{cor:traj-drift}.

Combining with Proposition~3,
\begin{equation}
\label{eq:piece-C-final}
\|(C)_s\|_2 \leq 6 k \varepsilon L_V G_\pi \cdot \|\Pi_s\|_{1 \to 1}.
\end{equation}

\subsubsection{Bound on piece (B): Jacobian-chain error}
\label{sec:piece-B}

The chain $\Pi_s$ is a product of $t+k-s-1$ belief-to-belief Jacobians. We establish that the chain $\Pi_s$ \emph{collapses algebraically} to a single Jacobian whose form depends only on the starting belief $b_{s+1}$ and the ending belief $b_{t+k}$, with all intermediate beliefs disappearing. This collapse keeps 
the dependence on $1/b_{\min}$ polynomial.

\begin{lemma}[Jacobian-chain collapse]
\label{lem:chain-collapse}
For any $r$ and $m \geq 0$,
\begin{equation}
J_{r+m} \cdot J_{r+m-1} \cdots J_{r+1} \cdot J_r 
= D_{r+m+1}\,(I - \mathbf{1}\,b_{r+m+1}^\top)\,D_r^{-1}.
\end{equation}
In particular, applying this with $r = s+1$ and $m = t+k-s-2$,
\begin{equation}
\label{eq:Pi-closed-form}
\Pi_s = D_{t+k}\,(I - \mathbf{1}\,b_{t+k}^\top)\,D_{s+1}^{-1}, 
\qquad 
\hat \Pi_s = \hat D_{t+k}\,(I - \mathbf{1}\,\hat b_{t+k}^\top)\,\hat D_{s+1}^{-1},
\end{equation}
where $D_r := \mathrm{diag}(b_r)$ and $\hat D_r := \mathrm{diag}(\hat b_r)$.
\end{lemma}

\begin{proof}
By induction on $m$.

\emph{Base case ($m = 0$).} The claim reads
\begin{equation}
J_r = D_{r+1}(I - \mathbf{1}\,b_{r+1}^\top)\,D_r^{-1}.
\end{equation}
We derive this from the chain rule. The forward computation is 
$b_{r+1} = \mathrm{softmax}(\ell_r + \log b_r)$, where 
$\ell_r := \log\pi_\theta(a_r \mid o_r, \cdot) \in \mathbb{R}^{|\mathcal{Z}|}$ 
is the log-likelihood vector at step $r$. Note that $\ell_r$ depends on 
$\theta$, $a_r$, and $o_r$ but not on $b_r$. Writing $u := \ell_r + \log b_r$ 
for the input to softmax, the dependency chain is
\begin{equation}
b_r \;\longrightarrow\; \log b_r \;\longrightarrow\; u = \ell_r + \log b_r 
\;\longrightarrow\; b_{r+1} = \mathrm{softmax}(u),
\end{equation}
giving by the chain rule
\begin{equation}
J_r = \frac{\partial b_{r+1}}{\partial b_r} 
= \frac{\partial b_{r+1}}{\partial u} \cdot \frac{\partial u}{\partial(\log b_r)} 
\cdot \frac{\partial(\log b_r)}{\partial b_r}.
\end{equation}
We compute each factor.

The map $b_r \mapsto \log b_r$ acts component-wise with 
$d \log x / dx = 1/x$, so its Jacobian is the diagonal matrix
\begin{equation}
\frac{\partial(\log b_r)}{\partial b_r} = \mathrm{diag}(1/b_r) = D_r^{-1}.
\end{equation}

Since $\ell_r$ does not depend on $b_r$,
\begin{equation}
\frac{\partial u}{\partial(\log b_r)} 
= \frac{\partial \ell_r}{\partial(\log b_r)} 
+ \frac{\partial(\log b_r)}{\partial(\log b_r)} 
= 0 + I = I.
\end{equation}

The standard softmax Jacobian formula gives, for 
$b_{r+1} = \mathrm{softmax}(u)$,
\begin{equation}
\frac{\partial b_{r+1}}{\partial u} 
= \mathrm{diag}(b_{r+1}) - b_{r+1} b_{r+1}^\top.
\end{equation}
Using $b_{r+1} = \mathrm{diag}(b_{r+1})\mathbf{1}$, we factor 
$b_{r+1} b_{r+1}^\top = \mathrm{diag}(b_{r+1})\,\mathbf{1}\,b_{r+1}^\top$, hence
\begin{equation}
\frac{\partial b_{r+1}}{\partial u} 
= \mathrm{diag}(b_{r+1})\,\big[I - \mathbf{1}\,b_{r+1}^\top\big] 
= D_{r+1}(I - \mathbf{1}\,b_{r+1}^\top).
\end{equation}

\emph{Combining the three factors,}
\begin{equation}
J_r = D_{r+1}(I - \mathbf{1}\,b_{r+1}^\top) \cdot I \cdot D_r^{-1} 
= D_{r+1}\,(I - \mathbf{1}\,b_{r+1}^\top)\,D_r^{-1}.
\end{equation}
This establishes the base case.

\emph{Inductive step.} Assume the claim holds for $m$. Multiplying on the 
left by $J_{r+m+1} = D_{r+m+2}(I - \mathbf{1}b_{r+m+2}^\top)D_{r+m+1}^{-1}$:
\begin{align}
J_{r+m+1} \cdot \big(J_{r+m} \cdots J_r\big)
&= D_{r+m+2}(I - \mathbf{1}b_{r+m+2}^\top)\,D_{r+m+1}^{-1} \cdot 
D_{r+m+1}(I - \mathbf{1}b_{r+m+1}^\top)\,D_r^{-1} \notag \\
&= D_{r+m+2}(I - \mathbf{1}b_{r+m+2}^\top)(I - \mathbf{1}b_{r+m+1}^\top)\,D_r^{-1},
\end{align}
where $D_{r+m+1}^{-1} D_{r+m+1} = I$ cancels. We claim that
\begin{equation}
\label{eq:rank-one-idempotent}
(I - \mathbf{1}b_{r+m+2}^\top)(I - \mathbf{1}b_{r+m+1}^\top) 
= I - \mathbf{1}b_{r+m+2}^\top.
\end{equation}
For any $x$, the right-hand factor gives $(I - \mathbf{1}b_{r+m+1}^\top)x = 
x - \mathbf{1}(b_{r+m+1}^\top x)$. Applying the left-hand factor and using 
$(I - \mathbf{1}b_{r+m+2}^\top)\mathbf{1} = \mathbf{1} - \mathbf{1}\,(b_{r+m+2}^\top \mathbf{1}) 
= 0$ (since $b_{r+m+2}^\top \mathbf{1} = 1$), the second term vanishes, leaving 
$(I - \mathbf{1}b_{r+m+2}^\top)x$. Equation~\ref{eq:rank-one-idempotent} 
follows. Substituting,
\begin{equation}
J_{r+m+1} \cdots J_r = D_{r+m+2}(I - \mathbf{1}b_{r+m+2}^\top)\,D_r^{-1},
\end{equation}
which is the inductive claim for $m+1$.
\end{proof}

The closed form in Eq.\ref{eq:Pi-closed-form} reduces $\Pi_s$ to three matrix factors. We now 
bound the magnitude $\|\Pi_s\|_{1 \to 1}$, which is used in the bounds on 
pieces (A), (C), (D), and the difference 
$\|\hat \Pi_s - \Pi_s\|_{1 \to 1}$ used in piece (B).

\begin{lemma}[Magnitude bound on $\Pi_s$]
\label{lem:Pi-magnitude}
Under the assumptions of Theorem~2,
\begin{equation}
\|\Pi_s\|_{1 \to 1} \leq \frac{|\mathcal{Z}| - 1}{b_{\min}}.
\end{equation}
The same bound holds for $\|\hat \Pi_s\|_{1 \to 1}$.
\end{lemma}

\begin{proof}
Recall the identity $\|M\|_{1 \to 1} = \max_j \sum_i |M_{ij}|$ (maximum 
column 1-norm). By Eq. \ref{eq:Pi-closed-form} and submultiplicativity of operator norms,
\begin{equation}
\|\Pi_s\|_{1 \to 1} \leq \|D_{t+k}\|_{1 \to 1} \cdot 
\|I - \mathbf{1}b_{t+k}^\top\|_{1 \to 1} \cdot \|D_{s+1}^{-1}\|_{1 \to 1}.
\end{equation}
We bound each factor.

\emph{Bound on $\|D_{t+k}\|_{1 \to 1}$.} Each column of $D_{t+k}$ has a 
single nonzero entry $b_{t+k}^j \in [0, 1]$, so
\begin{equation}
\|D_{t+k}\|_{1 \to 1} = \max_j b_{t+k}^j \leq 1.
\end{equation}

\emph{Bound on $\|D_{s+1}^{-1}\|_{1 \to 1}$.} Each column of $D_{s+1}^{-1}$ 
has a single nonzero entry $1/b_{s+1}^j$.
\begin{equation}
\|D_{s+1}^{-1}\|_{1 \to 1} = \max_j \frac{1}{b_{s+1}^j} 
= \frac{1}{\min_j b_{s+1}^j} \leq \frac{1}{b_{\min}}.
\end{equation}

\emph{Bound on $\|I - \mathbf{1}b_{t+k}^\top\|_{1 \to 1}$.} Column $j$  in $\|I - \mathbf{1}b_{t+k}^\top\|_{1 \to 1}$ has entry $(1 - b_{t+k}^j)$ at position $j$ and $-b_{t+k}^j$ at each of the other $|\mathcal{Z}| - 1$ positions. Its 1-norm is
\begin{equation}
(1 - b_{t+k}^j) + (|\mathcal{Z}| - 1)\,b_{t+k}^j 
= 1 + (|\mathcal{Z}| - 2)\,b_{t+k}^j 
\leq |\mathcal{Z}| - 1.
\end{equation}

\emph{Combining,}
\begin{equation}
\|\Pi_s\|_{1 \to 1} \leq 1 \cdot (|\mathcal{Z}| - 1) \cdot \frac{1}{b_{\min}} 
= \frac{|\mathcal{Z}| - 1}{b_{\min}}.
\end{equation}
The same bound applies to $\|\hat \Pi_s\|_{1 \to 1}$ via the analogous 
argument along the approximate trajectory.
\end{proof}

We now bound the difference $\|\hat\Pi_s - \Pi_s\|_{1 \to 1}$, which feeds 
directly into piece (B).

\begin{lemma}[Perturbation bound on $\Pi_s$]
\label{lem:Pi-difference}
Under the assumptions of Theorem \ref{thm:grad-error},
\begin{equation}
\|\hat\Pi_s - \Pi_s\|_{1 \to 1} \leq \frac{ (6|\mathcal{Z}| - 4) k\, \varepsilon}{b_{\min}^2},
\end{equation}
\end{lemma}

\begin{proof}
By the closed forms in Eq.\ref{eq:Pi-closed-form} and add-subtract telescoping along the three 
factors,
\begin{align}
\hat \Pi_s - \Pi_s 
&= \underbrace{(\hat D_{t+k} - D_{t+k})(I - \mathbf{1}\hat b_{t+k}^\top)\hat D_{s+1}^{-1}}_{\text{(i)}} \notag \\
&\quad + \underbrace{D_{t+k}\,\mathbf{1}(b_{t+k} - \hat b_{t+k})^\top\,\hat D_{s+1}^{-1}}_{\text{(ii)}} \notag \\
&\quad + \underbrace{D_{t+k}(I - \mathbf{1}b_{t+k}^\top)(\hat D_{s+1}^{-1} - D_{s+1}^{-1})}_{\text{(iii)}},
\label{eq:Pi-three-pieces}
\end{align}
where in (ii) the $I$'s cancel and the rank-one difference $(I - 
\mathbf{1}\hat b_{t+k}^\top) - (I - \mathbf{1}b_{t+k}^\top) = 
\mathbf{1}(b_{t+k} - \hat b_{t+k})^\top$ remains. We bound each piece in 
$\|\cdot\|_{1 \to 1}$ via submultiplicativity, using the magnitude bounds 
from the proof of Lemma~\ref{lem:Pi-magnitude} and Corollary \ref{cor:traj-drift}.

\emph{Piece (i).} The diagonal difference satisfies $\|\hat D_{t+k} - D_{t+k}\|_{1 \to 1} 
= \max_z |\hat b_{t+k}^z - b_{t+k}^z| \leq \|\hat b_{t+k} - b_{t+k}\|_1 \leq 2k\varepsilon$. 
Combined with $\|I - \mathbf{1}\hat b_{t+k}^\top\|_{1 \to 1} \leq |\mathcal{Z}| - 1$ and 
$\|\hat D_{s+1}^{-1}\|_{1 \to 1} \leq 1/b_{\min}$,
\begin{equation}
\|\text{(i)}\|_{1 \to 1} \leq 2k\varepsilon \cdot (|\mathcal{Z}| - 1) \cdot \frac{1}{b_{\min}} 
= \frac{2(|\mathcal{Z}| - 1)\, k \varepsilon}{b_{\min}}.
\end{equation}

\emph{Piece (ii).} The rank-one matrix $\mathbf{1}(b_{t+k} - \hat b_{t+k})^\top$ 
has column $j$ equal to $(b_{t+k}^j - \hat b_{t+k}^j)\mathbf{1}$, whose 1-norm is 
$|\mathcal{Z}| \cdot |b_{t+k}^j - \hat b_{t+k}^j|$. Hence $\|\mathbf{1}(b_{t+k} 
- \hat b_{t+k})^\top\|_{1 \to 1} \leq |\mathcal{Z}|\,\|b_{t+k} - \hat b_{t+k}\|_\infty 
\leq 2|\mathcal{Z}| k\varepsilon$. Combined with $\|D_{t+k}\|_{1 \to 1} \leq 1$ and 
$\|\hat D_{s+1}^{-1}\|_{1 \to 1} \leq 1/b_{\min}$,
\begin{equation}
\|\text{(ii)}\|_{1 \to 1} \leq 1 \cdot 2|\mathcal{Z}| k\varepsilon \cdot \frac{1}{b_{\min}} 
= \frac{2|\mathcal{Z}|\, k \varepsilon}{b_{\min}}.
\end{equation}

\emph{Piece (iii).} The diagonal difference $\hat D_{s+1}^{-1} - D_{s+1}^{-1}$ 
has entries
\begin{equation}
\frac{1}{\hat b_{s+1}^z} - \frac{1}{b_{s+1}^z} = 
\frac{b_{s+1}^z - \hat b_{s+1}^z}{b_{s+1}^z \hat b_{s+1}^z}.
\end{equation}
Using $b_{s+1}^z, \hat b_{s+1}^z \geq b_{\min}$ from Assumption~(3),
\begin{equation}
\|\hat D_{s+1}^{-1} - D_{s+1}^{-1}\|_{1 \to 1} 
\leq \max_z \frac{|b_{s+1}^z - \hat b_{s+1}^z|}{b_{s+1}^z \hat b_{s+1}^z} 
\leq \frac{2k\varepsilon}{b_{\min}^2}.
\end{equation}
Combined with $\|D_{t+k}\|_{1 \to 1} \leq 1$ and $\|I - \mathbf{1}b_{t+k}^\top\|_{1 \to 1} 
\leq |\mathcal{Z}| - 1$,
\begin{equation}
\|\text{(iii)}\|_{1 \to 1} \leq 1 \cdot (|\mathcal{Z}| - 1) \cdot \frac{2k\varepsilon}{b_{\min}^2} 
= \frac{2(|\mathcal{Z}| - 1)\, k \varepsilon}{b_{\min}^2}.
\end{equation}

\emph{Combining.} By the triangle inequality applied to 
Eq.~\ref{eq:Pi-three-pieces}, and using $1/b_{\min} \leq 1/b_{\min}^2$ since 
$b_{\min} \leq 1$,
\begin{align}
\|\hat \Pi_s - \Pi_s\|_{1 \to 1} 
&\leq \frac{2(|\mathcal{Z}| - 1)\, k\varepsilon}{b_{\min}} 
+ \frac{2|\mathcal{Z}|\, k\varepsilon}{b_{\min}} 
+ \frac{2(|\mathcal{Z}| - 1)\, k\varepsilon}{b_{\min}^2} \notag \\
&\leq \frac{(6|\mathcal{Z}| - 4)\, k\varepsilon}{b_{\min}^2}.
\end{align}
\end{proof}

\noindent We can now bound piece (B). By submultiplicativity,
\begin{equation}
\|(B)_s\|_2 \leq \|g\|_\infty \cdot \|\hat\Pi_s - \Pi_s\|_{1 \to 1} \cdot 
\|\hat S_s\|_{\infty \to 1} \cdot \|\hat G_s\|_{2 \to \infty}.
\end{equation}
Combining Lemma~\ref{lem:Pi-difference} with the magnitude bounds from 
Proposition \ref{prop:magnitude},
\begin{equation}
\label{eq:piece-B-final}
\|(B)_s\|_2 \leq L_V \cdot \frac{(6|\mathcal{Z}| - 4)\, k\varepsilon}{b_{\min}^2} 
\cdot 2 \cdot G_\pi 
= \frac{2(6|\mathcal{Z}| - 4)\, k \varepsilon\, L_V\, G_\pi}{b_{\min}^2}.
\end{equation}

\subsubsection{Assembly: combining the four pieces}
\label{sec:assembly}

We combine the bounds on pieces (A), (B), (C), (D), together with the bound 
$\|\Pi_s\|_{1 \to 1} \leq (|\mathcal{Z}| - 1)/b_{\min}$ from 
Lemma~\ref{lem:Pi-magnitude}, to produce the final bound stated in Theorem \ref{thm:grad-error}.

\paragraph{Per-summand bounds.}
Substituting Lemma~\ref{lem:Pi-magnitude} into Eqs.~80, 83, and 87:
\begin{align}
\|(D)_s\|_2 &\leq \frac{2(|\mathcal{Z}| - 1) L_V L_\pi \|\hat o_s - o_s\|_\infty}{b_{\min}}, \\
\|(A)_s\|_2 &\leq \frac{4(|\mathcal{Z}| - 1) k\varepsilon L_g G_\pi}{b_{\min}}, \\
\|(C)_s\|_2 &\leq \frac{6(|\mathcal{Z}| - 1) k\varepsilon L_V G_\pi}{b_{\min}}, \\
\|(B)_s\|_2 &\leq \frac{2(6|\mathcal{Z}| - 4) k\varepsilon L_V G_\pi}{b_{\min}^2} \quad \text{(\ref{lem:Pi-difference})}.
\end{align}

\paragraph{Sum across the window.}
Summing each piece across $s = t, \ldots, t+k-1$ (with $k$ summands), and 
using $\|\hat o_s - o_s\|_\infty \leq \sup_t \|\hat o_t - o_t\|_\infty$:
\begin{align}
\sum_s \|(D)_s\|_2 &\leq \frac{2(|\mathcal{Z}| - 1) k L_V L_\pi \sup_t \|\hat o_t - o_t\|_\infty}{b_{\min}}, \\
\sum_s \|(A)_s\|_2 &\leq \frac{4(|\mathcal{Z}| - 1) k^2\varepsilon L_g G_\pi}{b_{\min}}, \\
\sum_s \|(C)_s\|_2 &\leq \frac{6(|\mathcal{Z}| - 1) k^2\varepsilon L_V G_\pi}{b_{\min}}, \\
\sum_s \|(B)_s\|_2 &\leq \frac{2(6|\mathcal{Z}| - 4) k^2\varepsilon L_V G_\pi}{b_{\min}^2}.
\end{align}

\paragraph{Final assembly.}
By Eq.~74, $\|\nabla_\theta L_{\mathrm{approx}} - \nabla_\theta L_{\mathrm{true}}\|_2$ 
is bounded by the sum of the four. Using $1/b_{\min} \leq 1/b_{\min}^2$ 
(since $b_{\min} \leq 1$) to put all terms on a common denominator, and 
$\max(L_V, L_g)$ to absorb both Lipschitz constants, the $G_\pi$ contributions 
from (A), (B), (C) combine as
\begin{equation}
\frac{\big[4(|\mathcal{Z}|-1) + 6(|\mathcal{Z}|-1) + 2(6|\mathcal{Z}|-4)\big]\, k^2\varepsilon \, \max(L_V, L_g)\, G_\pi}{b_{\min}^2} 
= \frac{(22|\mathcal{Z}| - 18)\, k^2\varepsilon \, \max(L_V, L_g)\, G_\pi}{b_{\min}^2},
\end{equation}
while the $L_\pi$ contribution from (D) is bounded by 
$\frac{2(|\mathcal{Z}|-1) k L_V L_\pi \sup_t \|\hat o_t - o_t\|_\infty}{b_{\min}^2}$.

Combining,
\begin{equation}
\big\|\nabla_\theta L_{\mathrm{approx}} - \nabla_\theta L_{\mathrm{true}}\big\|_2 
\leq \frac{(22|\mathcal{Z}| - 18) \cdot k \cdot \max(L_V, L_g)}{b_{\min}^2}
\Big[k\varepsilon G_\pi + L_\pi \sup_t \|\hat o_t - o_t\|_\infty\Big].
\end{equation}
This completes the proof of Theorem \ref{thm:grad-error}.
\end{proof}

\subsection{Additional Training and Diagnostic Figures}
\label{app:diagnostic-figures}

The main Avalon comparison in Figure~\ref{fig:avalon-evals} uses a 3-seed evaluation shared by PPO, BBM, and D-BOS ($42,43,44$). Estimated-observation Avalon runs are also included in the graphs. We noticed a performance boost with using the estimated-observation mode as our agent is now able to shape multiple agents at the same time. 

\begin{figure}[htbp]
    \centering
    \includegraphics[width=\linewidth]{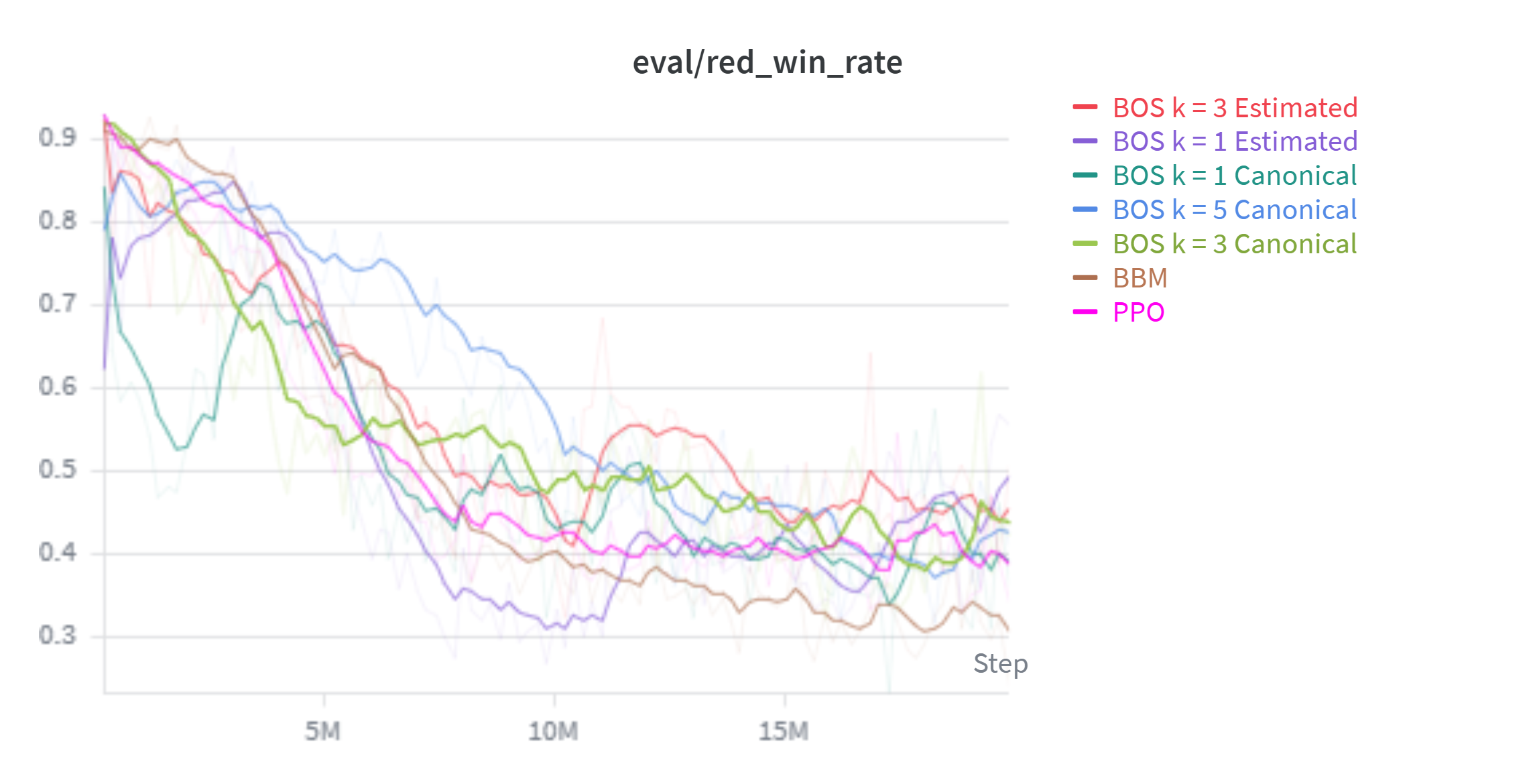}
    \caption{Avalon Evaluation metrics over time. The estimated runs dip much earlier compared to the other methods, but regain performance over time to end with a higher win rate, suggesting that shaping multiple observers probably sacrifices performance earlier in the runs for better rewards later.}
    \label{fig:avalon-evals}
\end{figure}

We further show in Table \ref{tab:avalon-per-seed}, how the Avalon Estimated-observations runs perform per seed. We notice that the estimated observation mode consistently outperforms baseline showcasing its effectiveness in maintaining a better win rate than baselines.

\begin{table}[htbp]
    \centering
    \small
    \begin{tabular}{lcccc}
        \toprule
        & \multicolumn{3}{c}{Avalon Win Rate (Per Independent Seed)} & \\
        \cmidrule(lr){2-4}
        Method & Seed 42 & Seed 43 & Seed 44 & Aggregate (Mean $\pm$ SE) \\
        \midrule
        PPO (No shaping) & 0.340 & 0.290 & 0.400 & 0.343 $\pm$ 0.032 \\
        BBM & 0.440 & 0.150 & 0.140 & 0.243 $\pm$ 0.098 \\
        \midrule
        D-BOS, $k=1$ Est. (Ours) & \textbf{0.490} & \textbf{0.530} & \textbf{0.650} & \textbf{0.557 $\pm$ 0.048} \\
        D-BOS, $k=3$ Est. (Ours) & \textbf{0.53} & \textbf{0.49} & \textbf{0.5} & \textbf{0.507 $\pm$ 0.012} \\
        \bottomrule
    \end{tabular}
    \vspace{3pt}
    \caption{Per-seed evaluation win rates in Avalon. While hidden-role environments naturally exhibit high inter-seed variance, breaking down the performance by independent runs demonstrates that D-BOS consistently outperforms baselines across initializations. (Note: D-BOS $k=1$ Estimated significantly dominates on Seed 44).}
    \label{tab:avalon-per-seed}
\end{table}

Figure ~\ref{fig:rtg-evals} showcases the training graphs for the Rescue-the-General environment for multiple different methods. We can see that D-BOS sustains performance for a longer period of time compared to baselines. All methods eventually drop in performance due to the environment dynamics, where blue ends up winning. We also notice a drop in performance proportional to the value of $k$ as suggested by Theorem \ref{thm:grad-error}.

\begin{figure}[htbp]
    \centering
    % --- Row 1 ---
    \begin{minipage}{0.48\linewidth}
        \centering
        \includegraphics[width=\linewidth]{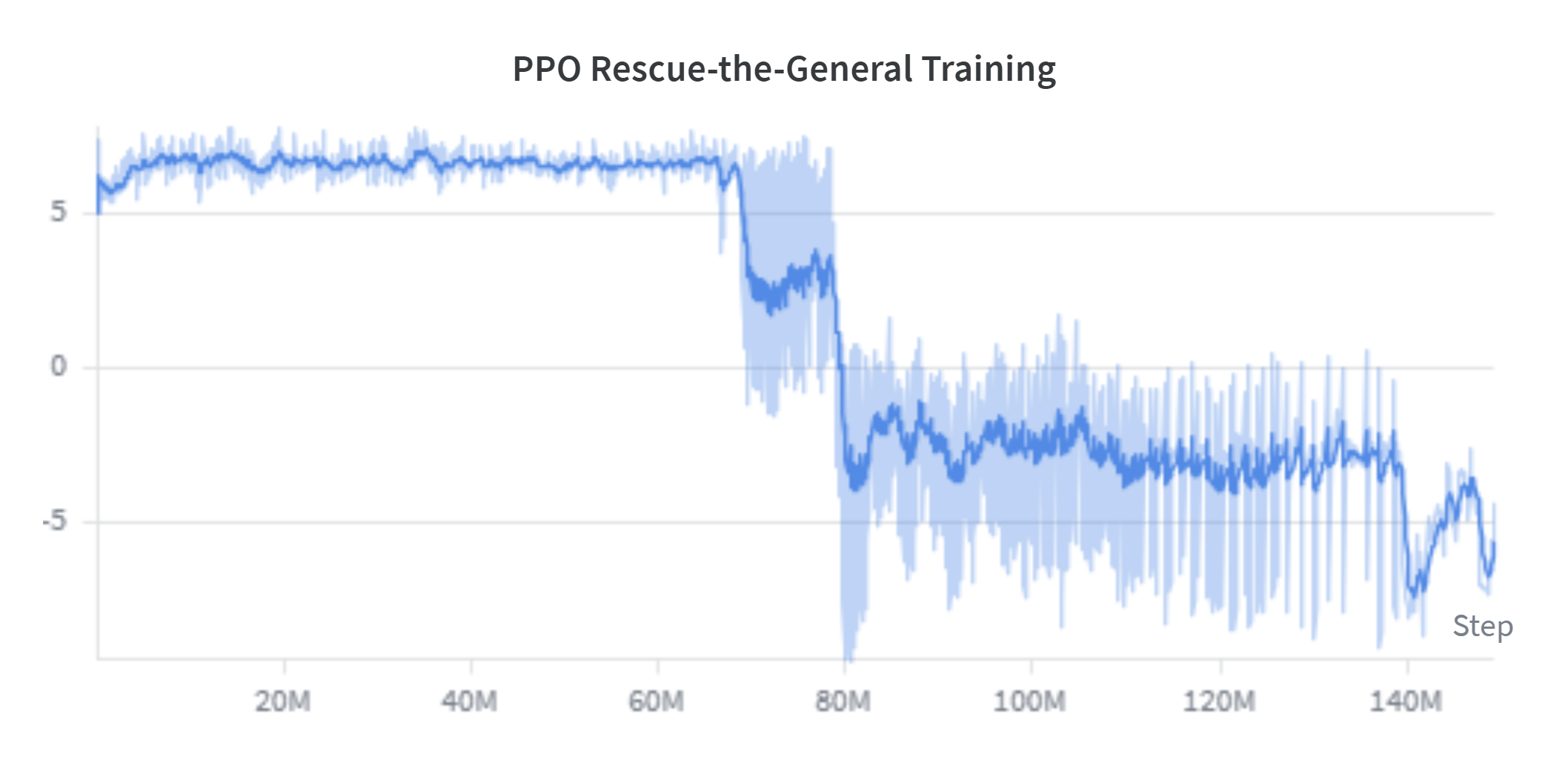}
        \small (a) PPO (No shaping)
    \end{minipage}
    \hfill
    \begin{minipage}{0.48\linewidth}
        \centering
        \includegraphics[width=\linewidth]{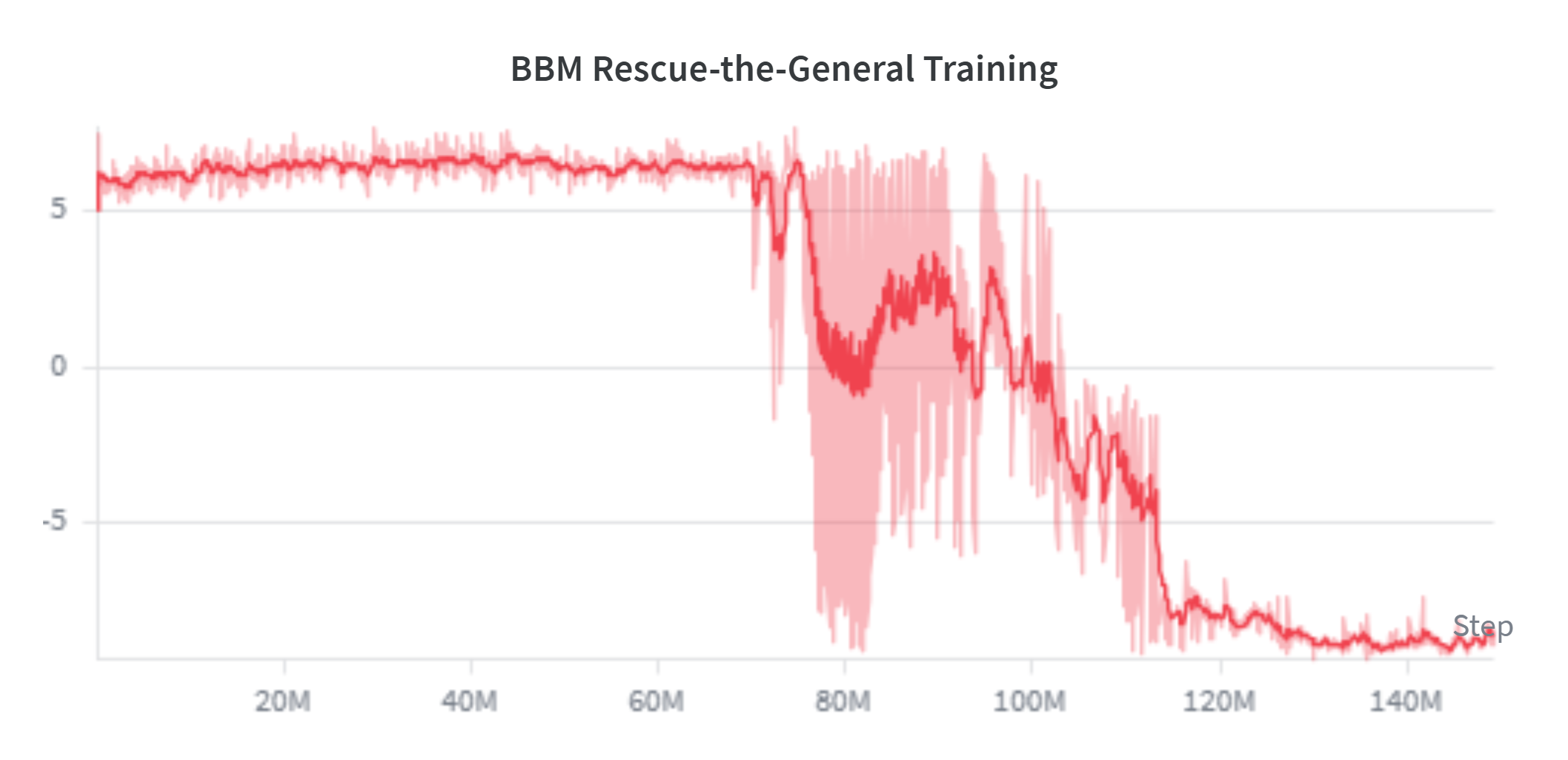}
        \small (b) BBM
    \end{minipage}

    \vspace{1em}

    % --- Row 2 ---
    \begin{minipage}{0.48\linewidth}
        \centering
        \includegraphics[width=\linewidth]{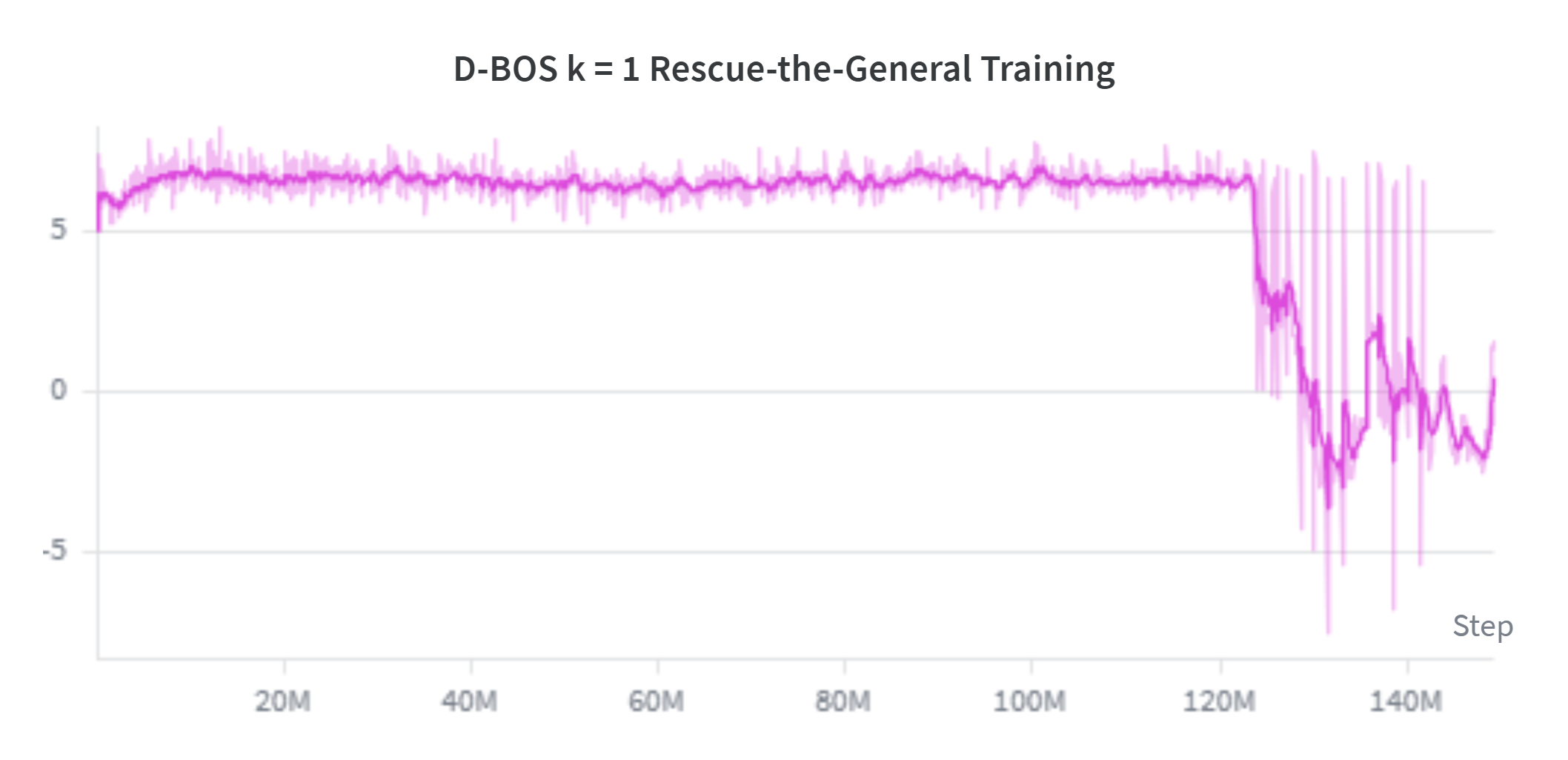}
        \small (c) D-BOS $k=1$
    \end{minipage}
    \hfill
    \begin{minipage}{0.48\linewidth}
        \centering
        \includegraphics[width=\linewidth]{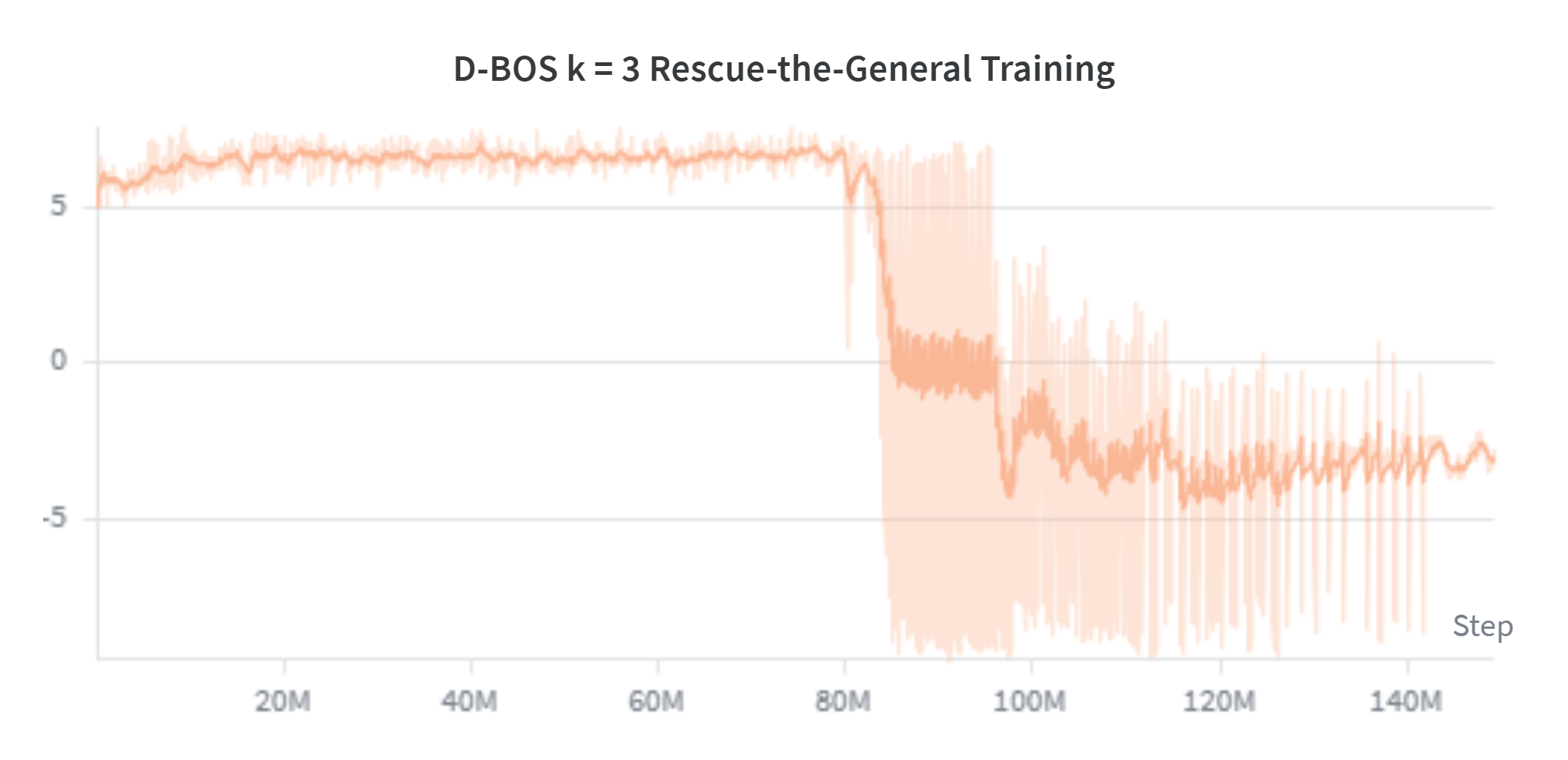}
        \small (d) D-BOS $k=3$
    \end{minipage}

    \vspace{1em}

    % --- Row 3 (Centered) ---
    \begin{minipage}{0.48\linewidth}
        \centering
        \includegraphics[width=\linewidth]{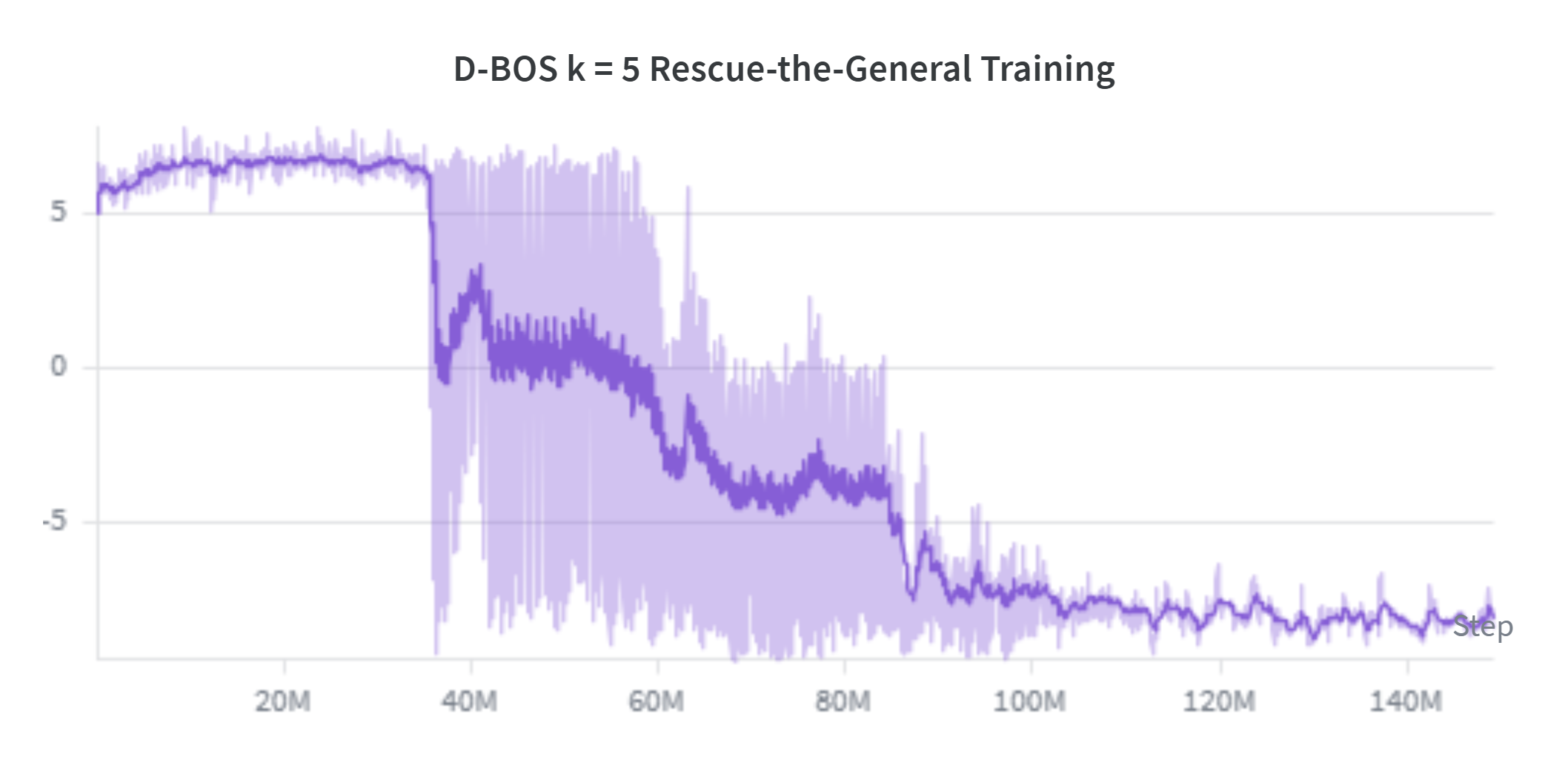}
        \small (e) D-BOS $k=5$
    \end{minipage}

    \vspace{1em}
    \caption{Rescue-the-General Training graphs over time. We observe that D-BOS with $k=1$ sustains performance significantly longer than baseline methods before the characteristic performance drop at 120M steps.}
    \label{fig:rtg-evals}
\end{figure}

\begin{figure}
    \centering
    \includegraphics[width=0.8\linewidth]{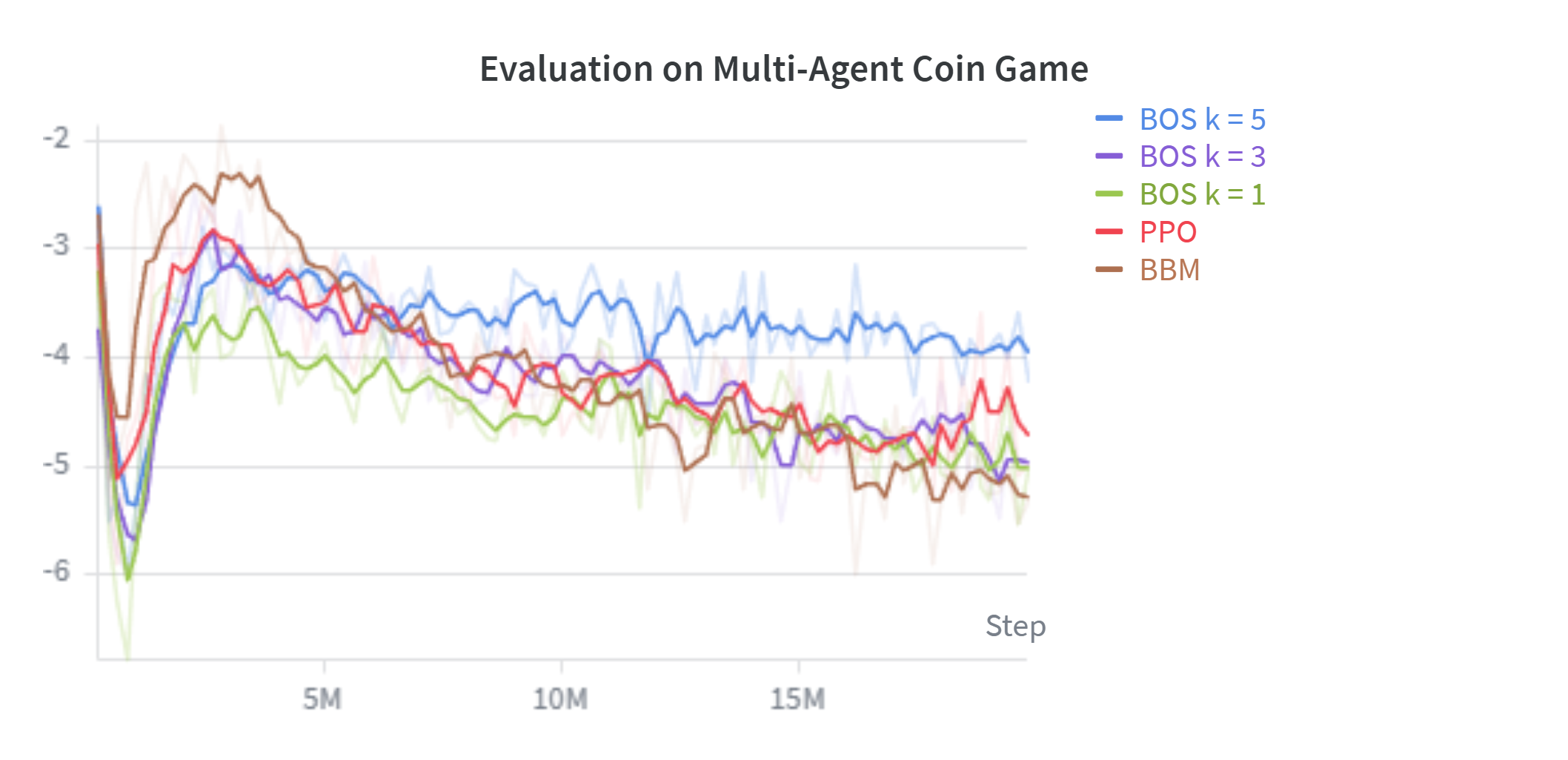}
    \caption{This figure shows the graphs for the Multi-Agent Coin Game evaluation across 3 seeds. We can see that in this method, it becomes quite hard for D-BOS to shape beliefs when the role is changing every episode. However, this serves as a stress test, and it motivates future work as D-BOS shows potential in such environments}
    \label{fig:placeholder}
\end{figure}

\subsection{Compute and Hyperparameters}
\label{app:compute-hparams}

All methods within an environment use the same rollout budget, optimizer family, policy architecture, and seed set.  RTG uses the CNN policy path with 64 parallel environments, rollout length 16, PPO learning rate $2.5\times 10^{-4}$, discount $\gamma=0.99$, GAE $\lambda=0.95$, and checkpoints/evaluation points every $10$M environment steps. The main RTG runs use $150$ million environment steps per seed. Avalon5 and CoinGame use the MLP policy path with 16 parallel environments, rollout length 32, two PPO epochs, two minibatches, learning rate $5\times10^{-4}$, discount $\gamma=0.99$, GAE $\lambda=0.95$, and a 20M-step training budget. Avalon uses a hidden dimension 128 and an entropy coefficient of 0.02; CoinGame uses hidden dimension of 64 and an entropy coefficient of 0.01.

We ran the experiments on A100-40GB, A100-80GB, and A30 GPUs. RTG was the most expensive setting, taking roughly two days per method/seed at the final budget. Avalon and CoinGame runs were substantially lighter, typically completing overnight for each method/seed, depending on the queue and GPU type. The dominant compute difference is across environments rather than across PPO, BBM, and D-BOS within the same environment.

\paragraph{Shaping-strength settings.}
The scalar shaping coefficient is not directly comparable across BBM and D-BOS. BBM scales an auxiliary belief-manipulation reward, while D-BOS scales a belief-gradient correction. We therefore select the shaping strength separately for each shaping mechanism while keeping the environment, architecture, optimizer, rollout budget, and seeds fixed. In Avalon, BBM uses $\lambda = 0.5$, matching the setting reported in the BBM reference paper~\citep{bbm} and the best stable value from our quick validation runs. The final Avalon D-BOS runs use $\lambda=1$, which gave the most reliable behavior in our tuning runs. In CoinGame, both BBM and D-BOS use $\lambda=0.5$; in RTG, D-BOS and BBM both use $\lambda=0.5$. These coefficients should be interpreted as method-specific tuning parameters rather than matched parameters

\subsection{Broader Impact and Societal Considerations}
\label{app:checklist-}
D-BOS provides agents with the ability to influence what others believe. While we study this capability in controlled game environments, the underlying mechanism of optimizing actions to shape observer posteriors has a dual-use potential. On the positive side, belief-aware agents can improve coordination in human-robot teaming, make autonomous systems more legible to human partners, and enable more effective communication under information asymmetry. On the negative side, the same mechanism could in principle be applied to train agents that systematically deceive human users or manipulate beliefs in adversarial settings. We note that D-BOS operates within a fixed game structure with known role sets and does not generalize to open-ended deception in natural language or unconstrained environments. Nevertheless, as belief-shaping methods mature, careful attention to deployment contexts and the development of counter-deception methods will be important. While we do test in an adversarial setting, our goal is to eventually use our method for better cooperation amongst humans and robots and do not condone the misuse of belief-based opponent shaping.

\subsection{LLM Usage}
\label{app:checklist-llm-usage}
We used large language models as a writing and editing aid during manuscript preparation, including assistance with prose drafting, LaTeX formatting, and proof presentation and formatting. All technical content, mathematical derivations, experimental design, implementation, and scientific claims are the original work of the authors. We referred to some help from LLMs in implementing and refining our codebase, but LLMs were not an important part of the research, nor did they contribute any technical details integral to our methods.

% \end{proof}

%%%%%%%%%%%%%%%%%%%%%%%%%%%%%%%%%%%%%%%%%%%%%%%%%%%%%%%%%%%%

\newpage

\end{document}